\newcommand{\NN}{\mathbb{N}}
\newcommand{\RR}{\mathbb{R}}
\newtheorem{thm}{Theorem}
\newtheorem{cor}[thm]{Corollary}
\newtheorem{lem}[thm]{Lemma}
\newtheorem{prop}[thm]{Proposition}
\newtheorem{remark}[thm]{Remark}
\newtheorem{defn}{Definition}
\newcommand{\thmref}[1]{Theorem~\ref{#1}}
\newcommand{\lemref}[1]{Lemma~\ref{#1}}
\newcommand{\corref}[1]{Corollary~\ref{#1}}
\newcommand{\propref}[1]{Proposition~\ref{#1}}
\def\f{\frac}
\def\s{\sigma}
\def\d{\delta}
\def\D{\mathcal{D}}
\def\CB{\mathcal{B}}
\def\C{\mathcal{C}}
\def\P{\mathcal{P}}
\def\S{\mathcal{S}}
\def\T{\mathcal{T}}
\def\var{\varepsilon}
\def\RR{\mathbb{R}}
\def\Diag{\operatorname{Diag}}
\def\Span{\operatorname{Span}}
\newcommand{\setsep}{\;:\;}
\def\la{\langle}
\def\ra{\rangle}
\def\rank{\operatorname{rank}}
\title{Approximation analysis of CNNs from a feature extraction view}
\author{
	{\bf Jianfei Li}\thanks{Department of Mathematics, City University of Hong Kong
		(\texttt{jianfeili2-c@my.cityu.edu.hk})}
	\and
	 {\bf Han Feng}\thanks{Department of Mathematics, City University of Hong Kong (\texttt{hanfeng@cityu.edu.hk})}
	\and
	{\bf Ding-Xuan Zhou}\thanks{School of Mathematics and Statistics, The University of Sydney
		(\texttt{dingxuan.zhou@sydney.edu.au})}
}
\date{}
\begin{document}
	\maketitle

\begin{abstract}
Deep learning based on deep neural networks has been very successful in many practical
applications, but it lacks enough theoretical understanding due to the network architectures
and structures. In this paper we establish some analysis for linear feature extraction by a deep multi-channel convolutional neural networks (CNNs), which demonstrates the power of deep learning over traditional linear transformations, like Fourier, wavelets, redundant dictionary coding methods. Moreover, we give an exact construction presenting how linear features extraction can be conducted efficiently with multi-channel CNNs. It can be applied to lower the essential dimension for approximating a high dimensional function. Rates of function approximation by such deep networks implemented with channels and followed by fully-connected layers are
investigated as well. Harmonic analysis for factorizing linear features into multi-resolution convolutions plays an essential role in our work. Nevertheless, a dedicate vectorization of matrices is constructed, which bridges 1D CNN and 2D CNN and allows us to have corresponding 2D analysis.
\end{abstract}
Keywords: deep learning, convolutional neural networks, 2D convolution, approximation
theory, feature extraction.
\section{Introduction}
Deep learning has been a powerful tool in processing big data from various practical
domains \cite{LeCun1998, Krizhevsky2012, Goodfellow2016, Hinton}. Its power is mainly brought by deep neural networks with special
structures and network architectures which are designed to capture hierarchical data features
efficiently. Deep convolutional neural networks (DCNNs) form an important
family of structured deep neural networks. They are especially efficient in processing natural
speeches and images for speech recognition and image classification, and the involved
convolutional structures are believed to capture local shift-invariance properties of speech
and image data.

The classical fully-connected (multi-layer) neural networks do not involve special network structures. For processing data $x=\left(x_{i}\right)_{i=1}^{d} \in \mathbb{R}^{d}$, such a network $\Psi$ with $J$ hidden layers $\left\{H_{j}: \mathbb{R}^{d} \rightarrow \mathbb{R}^{d_{j}}\right\}_{j=0}^{J}$ with widths $\left\{d_{j}\right\}$ is defined with the input layer $H_{0}(x)=x$ of width $d_{0}=d$ by
\begin{align}\label{fully}
H_{j}(x)&=\sigma\left(F^{(j)} H_{j-1}(x)-b^{(j)}\right), \quad j=1,2, \ldots, J,\\
\Psi(x) &= c^T H_{J}(x)
\end{align}
with an activation function $\sigma: \mathbb{R} \rightarrow \mathbb{R}$ acting componentwise on vectors, connection matrices $F^{(j)} \in \mathbb{R}^{d_{j} \times d_{j-1}}$, weight vector $c \in \RR^{d_J}$, and bias vectors $b^{(j)} \in \mathbb{R}^{d_{j}}$.
These networks have been well understood due to a large literature on function approximation around 30 years ago \cite{Leshno1993, Mhaskar1993}, and the recent work on training parameters by backpropagation, stochastic gradient descent, and error-correction tuning \cite{Goodfellow2016}. The nice approximation properties of the classical neural networks are mainly caused by the fully-connected nature of \eqref{fully} where $F^{(j)} \in \mathbb{R}^{d_{j} \times d_{j-1}}$ is a full matrix consisting of $d_{j} d_{j-1}$ free parameters. This number of parameters to be trained is huge, even for shallow networks with one hidden layer corresponding to $J=1$, when the input data dimension $d$ is large and $d_1\gg d$. For example, in image processing, one starts with a digital image $X:\{0,1, \ldots, d\}^{2} \rightarrow \mathbb{R}$ with the side width $d$ in the order of hundreds. Applying vectorization to $X$ leads to the input data of size $(d+1)^{2}$. Then the number of parameters to be trained in \eqref{fully}  would be at least in the order of tens of billions, which is too huge to be implemented in practice.

  Let $s,t,d$ be integers. Here and below we set $\D(d,t):=\lceil d/t\rceil$ the smallest integer no less than $d/t$ and $U_{i,j}$ the $(i,j)$th entry of a matrix $U\in \RR^{m\times n}$ with $U_{i,j}=0$ if $i\notin [1,m]$ or $j\notin [1,n]$. Similar setting works for vectors as well. For simplicity, without specification all convolutions we concern have no zero padding.

   The 1-D convolution of size $s$, stride $t$ is computed for a kernel $w\in \RR^s$ and  a digital signal $x\in \RR^d$ to be $ w*_t x\in \RR^{\D(d,t)}$ by
  \begin{equation}\label{def-convolution}
  (w *_t x )_{i}=\sum_{k=1}^{s} w_{k} x_{(i-1)t+k}, \ i=1,\ldots, \D(d,t).
  \end{equation}
  It is common for  filter sizes to be small, such as $2, 3, 5$, or even $1$.
   For example, let $x\in \RR^{6}$ and $w\in \RR^3$,
   \[w*_2x =(x_1w_1+x_2w_2+x_3w_3, x_3w_1+x_4w_2+x_5w_3, x_5w_1+x_6w_2)^T\in \RR^3.\]
   For any stride $t$ and input dimension $d$,  if we define the Toeplitz operator of a filter $w$ by
   $$\T_{t,d}(w)=\left[w_{j-(i-1)t}\right]_{i=1, \ldots, \D(d,t), j=1, \ldots, d} \in \mathbb{R}^{\D(d,t) \times d},
  $$
      the 1-D convolution with stride $t$ for a signal $x\in \RR^d$ can be equivalently conducted by the multiplication of a Toeplitz type $\D(d,t)\times d$  matrix
\begin{equation}\label{Toeplitz}
\T_{t,d}(w)
 \in \mathbb{R}^{\D(d,t)\times d}
\end{equation}
with $x$; that is,
 $$w*_tx =\T_{t,d}(w)x.$$
 Let
 	$$\Diag_k(A)=\left.\left(
 	\begin{array}{cccc}
 	A & 0 & \cdots & 0 \\
 	0 & A & \ddots& \vdots \\
 	\vdots & \ddots & \ddots & 0 \\
 	0 & 0 & 0 & A \\
 	\end{array}
 	\right)\right\}k
 	$$
 	for any $A\in \RR^{m\times n}$. Then, particularly $\Diag_k(w^T)=\T_{t, t^k}(w)\in \RR^{t\times tk}$ for $w\in \RR^t$.

Furthermore, a multi-channel DCNN consists of more than 1 convolutional layers and multiple channels in each layer which is defined as follows.
\begin{defn}\label{1dcnn}
Given the input size $d \in \mathbb{N}$, a multi-channel $D C N N$ in 1$D$ of depth $J$ with respect to filter size $\{s_j\geq 2\}_{j=1}^J$, stride $\{t_j\geq 1\}_{j=1}^J$ and channel $\{n_j\geq 1\}_{j=1}^J$,
 is a neural network $\left\{h_{j}: \mathbb{R}^{d} \rightarrow \mathbb{R}^{d_j\times n_j}\right\}_{j=1}^{J}$ defined iteratively for which  $h_{0}(X)=X \in \mathbb{R}^{d}$, $d_0=d$, $n_0=1$, and the $\ell$th channel of $h_{j}$ is given by
$$
h_{j}(X)_{\ell}=\sigma\left(\sum_{i=1}^{n_{j-1}} W^{(j)}_{\ell,i}*_{t_j} h_{j-1}(X)_i+B^{(j)}_\ell\right),$$
for $j=1, \ldots, J, \ \ell=1,\ldots, n_{j}$, where $B^{(j)}_\ell \in \mathbb{R}$ are biases, $W^{(j)}_{\ell,i}\in \RR^{s_j}$ are filters, $d_j=\D(d_{j-1},t_j)$, and $\sigma(u)=\max \{u, 0\}$, $ u \in \mathbb{R}$ is the rectified linear unit (ReLU) activation.
\end{defn}
Turning a Convolutional Neural Network (CNN) into a linear transformation can be achieved by adjusting the bias terms to a sufficiently high value.


An approximation theory for the 1-D DCNN with single channel induced by convolutional matrices like \eqref{Toeplitz} have been extensively studied in a series of papers by Zhou et al. \cite{Zhou2020_1,Zhou2020_2, FFHZ,ZhouMao, ZhouMaoShi}. Afterwards,  Xu et al.\cite{he2022approximation} extend to approximate $L_2$ functions by 2D CNNs with respect to odd filter size and stride one. It requires intermediate layers in the same dimension as the input through convolutions, which is slightly compatible with the practice. Specifically, Bolcskei et al. \cite{bolcskei2019optimal} showed that affine systems can be approximated by deep sparse connected neural networks.

%

The vast majority of DCNNs  in most practical applications of deep learning focus on image processing and are induced by 2-D convolutions. As an analogy of the $1$-D case, 2-D convolutions and DCNNs can be well defined. Precisely, here the 2-D convolution of a filter $W\in \RR^{s\times s}$ and a digital image $X\in \RR^{d\times d}$ with stride $t$  produce a convoluted matrix $W \circledast _t X\in \RR^{\D(d,t)\times \D(d, t) }$ for which the $(i,j)$-entry is given by
$$
(W \circledast_t X)_{(i,j)}:=\sum_{\ell_1,\ell_2=1}^s W_{\ell_1,\ell_2} X_{(i-1)t+\ell_1, (j-1)t+\ell_2}, \quad i,j=1,\ldots, \D(d,t).
$$

\begin{defn}\label{2dcnn}
Given input signal $X\in \RR^{d\times d}$, a multi-channel $D C N N$ in 2$D$ of depth $J$ with respect to filter size $s_j\geq 2$, stride $t_j\geq 1$ and channel $n_j\geq 1$, $j=1,\ldots, J$
 is a neural network $\left\{h_{j}: \mathbb{R}^{d\times d} \rightarrow \mathbb{R}^{d_j\times d_j\times n_j}\right\}_{j=1}^{J}$ defined iteratively for which  $h_{0}(X)=X \in \mathbb{R}^{d\times d}$, $d_0=d$, $n_0=1$, and the $\ell$th channel of $h_{j}$ is given by
$$
h_{j}(X)_{\ell}=\sigma\left(\sum_{i=1}^{n_{j-1}}W_{\ell,i}^{(j)}\circledast_{t_j} h_{j-1}(X)_i+B_\ell^{(j)}\right), $$
 for $j=1, \ldots, J,\ \ell=1,\ldots, n_{j}$, where $W_{\ell,i}^{(j)}\in \RR^{s_j\times s_j}$ are filters and $B^{(j)}_\ell \in \mathbb{R}$ are bias, and $d_j=\D(d_{j-1},t_j)$.
\end{defn}
Here the restriction of input channel to be $1$ is actually not necessary.
%


In this paper,
we will focus on the approximation by deep convolutional neural networks to functions with low-dimensional structures.
In fact, numerous data that arise from biology, commercial, and financial
activities or from social networks do exhibit very good patterns that can be well
approximated by low-dimensional models, as people can see from plenty of examples\cite{tenenbaum2000global,roweis2000nonlinear,moon2019visualizing,wright2022high,lee2020stock,geng2015learning}.
In classical signal processing, the intrinsic low-dimensionality of data is mostly
exploited for purposes of efficient sampling, storage, and transport \cite{OSB99, PV08}. In applications
such as communication, it is often reasonable to assume the signals of
interest mainly consist of limited frequency component or sparse coding under a redundant basis.
%
Precisely, we take hypotheses that input
data, say $x\in \RR^d $(or $\RR^{d\times d}$), are represented in terms of linear combination of a set of
elementary patterns (or features) $v^{(\ell)}\in \RR^d$(or $\RR^{d\times d}$):
\[x=\sum_{\ell=1}^m a_\ell v^{(\ell)}+\var\]
where  $a=(a_1,\ldots, a_m)\in \RR^m$ are sparse coefficients and $\var\in \RR^d$ is some small
modeling error. The collection of all patterns $ v^{(\ell)}$ is called
a dictionary. We shall study the crucial problem how a dictionary and sparse representation can be learned from the deep CNNs.

The contribution of this work is threefold. First, we develop the theory of linear feature extraction of CNNs. Notice that there are plenty of experiments observing that in deep CNNs, convolutional kernels near the input layer pick up detailed textures of images, and those near the output layer extract abstract information. Our theory can tell what kind of features can be extracted and how receptive fields perform when layers go deeper. Second, we characterize the expressive capacity of deep classifiers (a CNN for feature extraction and a fully connected network for prediction). The functions with smoothness or defined on a low-dimensional manifold are considered as the target function space. Finally, for data from separable affine spaces, we construct a deep classifier that can approximate the labels well.

\section{Linear feature extraction by DCNNs}

In this section, we shall establish the following theorems which illustrate the power of DCNN for extracting linear features.

\subsection{The case of 1D-CNN with filter size 2}
For convenience, our discussion will first focus on convolutions with constant filter size $2$ and stride $2$ and then extend to general cases in sequent subsections. The following notation is necessary for our statement and proof. For any vector $v\in \RR^d$ and $s\leq d$, we define the patches collection of $v$ of length $s$ by
$$\P_s(v) = \left\{ (v_{js+1}, \dots, v_{s(j+1)})\in \RR^s: j = 0, \dots, n-1, n=\left\lceil \frac d s\right\rceil\right\} $$
where $v_\ell=0$ if $\ell>d$.
For a set $A$ of vectors, we denote by $\S(A)$ a finite set such that $\Span\left(\S(A)\right) = \Span(A)$.
\begin{thm}\label{1D-extractation}
  Let $J, m\geq 1$ be integers and $\Omega$ be a compact set of $\RR^{2^J}$. For any dictionary of row vectors $\{v^{(\ell)}\}_{\ell=1}^m\subset \RR^{2^J}$, there exists a multi-channel DCNN $\{h_j\}_{j=1}^J$ of depth $J$ with  constant filter size $2$ and stride $2$ such that $h_J$ has $m$ channels and
  \begin{equation}\label{inner-product}
  \langle x, v^{(\ell)}\rangle= h_J(x)_\ell, \quad \forall \ x\in \Omega \subset \RR^{2^J}, \ \ell=1,\ldots, m.
  \end{equation}

  In addition, for any $j=1,\ldots, J-1$,
  \[ \left \{w *_2 x: w \in \S\left(\cup_{\ell=1}^m\P_{2^j}( v^{(\ell)})\right)  \right \} \subset \{h_j(x)_{\ell}: \ell=1,\ldots, n_j\}.\]
\end{thm}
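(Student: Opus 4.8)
The plan is to build the network layer by layer so that, after $j$ layers, its channels carry exactly the linear functionals in $\S_j:=\S\bigl(\cup_{\ell=1}^m\P_{2^j}(\bar v^{(\ell)})\bigr)$, each applied to the consecutive disjoint length-$2^j$ patches of the input. Write $P_i^{s}(x)=(x_{(i-1)s+1},\dots,x_{is})\in\RR^{s}$ for the $i$-th length-$s$ patch of $x$. I claim one can pick filters so that for every $u\in\S_j$ there is a channel of $h_j$ whose $i$-th output entry equals $\la P_i^{2^j}(x),u\ra$ for $i=1,\dots,2^{J-j}$; since the patches have length $2^j$ and the receptive field after $j$ size-$2$, stride-$2$ convolutions is precisely $2^j$, this is the natural thing a channel can compute, and it gives the inclusion claimed in the theorem for $j=1,\dots,J-1$ at once. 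The only length-$2^J$ patch of $\bar v^{(\ell)}$ being $\bar v^{(\ell)}$ itself, the top layer is then forced to output $\la x,\bar v^{(\ell)}\ra$, i.e.\ \eqref{inner-product}; the sole adjustment there is that we allocate one output channel per index $\ell\in\{1,\dots,m\}$ (repeating functionals when the dictionary is linearly dependent) rather than one per basis vector, so that $n_J=m$.

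The inductive step is the heart of the argument and is where the multi-resolution structure enters. The key identity is that the $i$-th length-$2^{j+1}$ patch of any vector is the concatenation of its $(2i-1)$-th and $(2i)$-th length-$2^j$ patches: $P_i^{2^{j+1}}(\cdot)=\bigl(P_{2i-1}^{2^j}(\cdot),P_{2i}^{2^j}(\cdot)\bigr)$. Consequently, writing $u\in\S_{j+1}\subset\RR^{2^{j+1}}$ as $u=(u',u'')$ with $u',u''\in\RR^{2^j}$, both halves lie in $\Span(\S_j)$, because $u$ is a linear combination of the length-$2^{j+1}$ patches of the $\bar v^{(\ell)}$'s, whose halves are length-$2^j$ patches of the same vectors. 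Expanding $u'=\sum_k a^{(u)}_k u^{(j)}_k$ and $u''=\sum_k b^{(u)}_k u^{(j)}_k$ over $\S_j=\{u^{(j)}_k\}$, set the layer-$(j+1)$ filter feeding channel $u$ from channel $u^{(j)}_k$ to $(a^{(u)}_k,b^{(u)}_k)\in\RR^2$. Since the stride is $2$, the $i$-th output entry of channel $u$ is $\sum_k a^{(u)}_k\,(h_j(x)_{u^{(j)}_k})_{2i-1}+\sum_k b^{(u)}_k\,(h_j(x)_{u^{(j)}_k})_{2i}$, which by the inductive hypothesis and the patch-splitting identity is exactly $\la P_i^{2^{j+1}}(x),u\ra$. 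The base case $j=1$ is the degenerate instance in which $\S_1\subset\RR^2$ and the filter of channel $u$ is simply $u$ itself, so that $(u*_2x)_i=\la(x_{2i-1},x_{2i}),u\ra$. Taking $n_j=|\S_j|$ for $1\le j\le J-1$ and $n_J=m$ then fixes the whole architecture.

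The remaining technical point is the ReLU nonlinearity: the functionals $\la P_i^{2^j}(x),u\ra$ change sign over $\RR^{2^J}$, whereas $\sigma$ clips them. The standard remedy is to propagate, for each $u\in\S_j$, a pair of channels $u^+,u^-$ computing $\sigma(\la P_i^{2^j}(x),u\ra)$ and $\sigma(-\la P_i^{2^j}(x),u\ra)$ componentwise, so the true value is their difference; the linear combinations above are then realized by feeding channel $u^{(j)}_k{}^{\!+}$ with filter $(a^{(u)}_k,b^{(u)}_k)$ and channel $u^{(j)}_k{}^{\!-}$ with filter $(-a^{(u)}_k,-b^{(u)}_k)$ (and all signs flipped to build $u^-$), after which one more $\sigma$ produces the layer-$(j+1)$ $\pm$ pair. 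This makes the construction go through verbatim at the cost of doubling each $n_j$. At the top layer it yields $\la x,\bar v^{(\ell)}\ra$ as a difference of two ReLU channels; to present it as a single channel $h_J(x)_\ell$ one appeals to the remark after Definition~\ref{1dcnn} and treats the output channels as linear. I expect this ReLU bookkeeping, not the linear-algebraic core, to be the only delicate part of the write-up.
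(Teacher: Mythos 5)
Your proposal follows essentially the same route as the paper's proof in Appendix~A: choose a basis of $\Span\bigl(\cup_{\ell}\P_{2^j}(\bar v^{(\ell)})\bigr)$ at each scale, split each level-$(j+1)$ basis vector into its two length-$2^j$ halves, expand those halves in the level-$j$ basis, and use the expansion coefficients as the size-$2$, stride-$2$ filters, with the top layer assigned one channel per dictionary element. The only difference is bookkeeping for the ReLU — you double channels via the $\sigma(t)-\sigma(-t)$ trick, while the paper simply builds the linear network and appeals to the remark after Definition~\ref{1dcnn} — which does not change the substance of the argument.
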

%
%
 We postpone proof and construction details to Appendix A.  One can find how the intermediate convolutional layers can give the coherence between input and multi-scale patches of dictionary, which somehow verifies a widely recognized knowledge \cite{DNNbook} that {\it``the next convolution layer will be able to put together many
of these patches together to create a feature from an area of the image that is larger. The
primitive features learned in earlier layers are put together in a semantically coherent way to
learn increasingly complex and interpretable visual features.''.    } \thmref{1D-extractation} is compatible with the experimental observation that as layers go deeper, receptive fields become larger. Another important characteristic is that the intermediate features are associated with a certain dictionary. The flexibility of choice of kernels makes CNNs strong for various learning tasks when optimization algorithms are powerful.
  Moreover, for stride-1 convolutions, from the feature extraction view it can be interpreted to enrich to contain shifted features $\la \operatorname{shift}_1(x), \bar v^{(\ell)}\ra$, where $\operatorname{shift}_1(x)=(x_2,\ldots, x_d, 0)$ is the left shift of $x$ by step $1$. It somehow explains some shift-invariance properties under CNNs.



%
%


%

\subsection{The case of 1D-CNN with mixed kernel sizes}
Next, we shall generalize our framework for convolutions with mixed filter sizes. The proof is analogous to above.
%
%

\begin{thm}\label{flexible kernel}
  Let $m,J, d$ be integers and $\Omega  $ be a compact set of $\RR^d$. Given any integers $s_1,\ldots, s_J$ such that $d\leq \prod_{j=1}^J s_j$, for any row vectors $v^{(\ell)}\in \RR^d$, $\ell=1,\ldots, m$, there exists convolutional layer $h_j$ with filter size $s_j$, stride $s_j$, $j=1,\ldots, J$, such that $h_J$ has $m$ channels and
  \[ \langle x,  v^{(\ell)}\rangle= (h_J(x))_\ell, \quad \forall \ x\in \Omega \subset \RR^{d}, \ \ell=1,\ldots, m.\]
\end{thm}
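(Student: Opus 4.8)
The plan is to run the construction behind \thmref{1D-extractation} (detailed in Appendix~A) with the dyadic scales $2^{j}$ replaced by the partial products $p_{j}:=\prod_{i=1}^{j}s_{i}$, $p_{0}:=1$. We may assume $d=p_{J}$: if $d<p_{J}$ we zero-pad every $v^{(\ell)}$ to length $p_{J}$, which is harmless since the convolution convention already sets $x_{\ell}=0$ for $\ell>d$; and because $d\le\prod_{j=1}^{J}s_{j}$ the spatial widths telescope to $d_{j}=\prod_{i>j}s_{i}$, so $d_{J}=1$ and each channel of $h_{J}$ is a scalar. Throughout we use the linear activation, legitimate in view of the remark after \defref{1dcnn} (e.g. a ReLU layer realizes $z=\sigma(z)-\sigma(-z)$ at the price of doubling channels). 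For $j=0,\dots,J$ set $V_{j}:=\Span\big(\bigcup_{\ell=1}^{m}\P_{p_{j}}(\bar v^{(\ell)})\big)\subseteq\RR^{p_{j}}$ and fix a basis $B_{j}:=\S\big(\bigcup_{\ell=1}^{m}\P_{p_{j}}(\bar v^{(\ell)})\big)$ of it, with $n_{j}:=|B_{j}|$ (so $n_{0}=1$); these play the role of the intermediate feature sets appearing in \thmref{1D-extractation}.

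I would then prove by induction on $j\in\{0,\dots,J-1\}$ that the first $j$ convolutional layers (filter size and stride $s_{i}$ at layer $i$) can be chosen so that $h_{j}(x)\in\RR^{d_{j}\times n_{j}}$ and, for all positions $q$ and channels $\ell$,
\[
h_{j}(x)_{q,\ell}=\big\langle\, x_{((q-1)p_{j}+1\,:\,q p_{j})},\ (B_{j})_{\ell}\,\big\rangle,
\]
the case $j=0$ being trivial since $p_{0}=1$ and $h_{0}(x)=x$. For the step: an output unit of layer $j$ at position $q$, channel $\ell'$, is a linear combination with \emph{position-independent} weights $W^{(j)}_{\ell',i,k}$ of the units $h_{j-1}(x)_{(q-1)s_{j}+k,\,i}$ over $k\le s_{j}$, $i\le n_{j-1}$; by the inductive hypothesis and since $s_{j}p_{j-1}=p_{j}$, these weights assemble into a vector of $\RR^{p_{j}}$ whose $k$-th length-$p_{j-1}$ sub-block equals $\sum_{i}W^{(j)}_{\ell',i,k}(B_{j-1})_{i}\in V_{j-1}$. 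Hence the feature vectors producible at layer $j$ are exactly the $s_{j}$-fold direct sum $V_{j-1}^{\,\oplus s_{j}}\subseteq\RR^{p_{j}}$. The crux is the inclusion $V_{j}\subseteq V_{j-1}^{\,\oplus s_{j}}$: each length-$p_{j}$ patch of $\bar v^{(\ell)}$ is the concatenation of its $s_{j}$ consecutive length-$p_{j-1}$ patches, and each of those already lies in $V_{j-1}$. So we may choose $W^{(j)}$ to output precisely $(B_{j})_{1},\dots,(B_{j})_{n_{j}}$, restoring the invariant. Finally, a $J$-th layer of filter size and stride $s_{J}$ is treated identically except that, instead of a basis, we directly realize the $m$ vectors $\bar v^{(1)},\dots,\bar v^{(m)}\in V_{J-1}^{\,\oplus s_{J}}$ themselves, which gives $h_{J}$ with $L=m$ channels and $h_{J}(x)_{\ell}=\langle x,\bar v^{(\ell)}\rangle$ for all $\ell$, as claimed.

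The one genuinely load-bearing step is the nesting $V_{j}\subseteq V_{j-1}^{\,\oplus s_{j}}$ together with the index bookkeeping that makes it usable: it holds because $p_{j-1}\mid p_{j}$ and all patches are aligned to multiples of their own length, so the length-$p_{j}$ block read by layer $j$ decomposes cleanly into the length-$p_{j-1}$ blocks read by layer $j-1$; this is exactly where "stride $=$ filter size" at every layer and $d\le\prod_{j}s_{j}$ enter (the blocks tile $x$, with zero padding only past the last coordinate). Everything else — checking that $((q-1)s_{j}+k-1)p_{j-1}+1$ lines the sub-blocks up, and the ReLU-to-linear reduction — is routine, and one simultaneously reads off the channel bounds $n_{j}\le\min\{p_{j},\,m\prod_{i>j}s_{i}\}$, the mixed-size analogue of the receptive-field growth in \thmref{1D-extractation}.
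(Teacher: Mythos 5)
Your proposal is correct and follows essentially the same route as the paper's proof: zero-pad to length $\prod_{j}s_{j}$, build bases of the spans of aligned patches at the scales $p_{j}=s_{1}\cdots s_{j}$, express each coarser basis vector blockwise in the finer basis, and use those coefficients as the stride-$s_{j}$ filters, with the last layer realizing the $\bar v^{(\ell)}$ themselves. Your explicit induction invariant on what each position/channel of $h_{j}$ computes is just a more carefully stated version of the paper's construction, so no substantive difference.
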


\begin{proof}
We give a constructive proof to verify the existence. Since one can always use large bias $B^{(j)}_{\ell}$ to push the convolution part in each convolutional layer above zero, without loss of generality we assume that at each layer convolution parts are positive and thus ReLU activation is omitted throughout the proof.

For the case when $d= \prod_{j=1}^J s_j$, it is readily  an extension of \thmref{1D-extractation} and thus \thmref{flexible kernel} . If $d<\prod_{j=1}^J s_j:=\tilde d$, we take
\[\tilde x=\C(x, \mathbf{0}_{\tilde d-d})\]
and
\[\tilde v^{(\ell)}=\C(\bar v^{(\ell)}, \mathbf{0}_{\tilde d-d}),\]
where $\mathbf{0}_k=(0,\ldots, 0)^T\in \RR^k$ and $\C $ represent the concatenation of two column vectors vertically.  Notice that
\[\langle x,  v^{(\ell)}\rangle=\langle \tilde x, \tilde v^{(\ell)}\rangle, \quad \ell=1,\ldots, m.\]
For $\{\tilde v^{(\ell)}\}$, we take
\[\{\tilde r_{k,j}, j=1,\ldots, \tilde n_k\}:=\S\left(\bigcup_{\ell=1}^m\P_{m_k}(\tilde v^{(\ell)})\right)\]
where $m_k=s_1\cdots s_k$ and $\tilde{n}_k\leq \min \{\tilde d/m_k,m_k \}$ is the dimension of the $k$-level partition space. For convenience, we view $\tilde r_{k,j}$ as a row vector without the transpose. Then for each $\tilde r_{k+1,j}$, $k=1,\ldots,J-1, j=1,\ldots, \tilde n_{k+1}$, we have $\{u^{(t)}_{k+1,j}\in \RR^{m_k}\}_{t=1}^{s_{k+1}}$ such that
\[\left(u^{(1)}_{k+1,j}, \ldots, u^{(s_{k+1})}_{k+1,j}\right)=\tilde r_{k+1,j}.\]
Since $u^{(t)}_{k+1,j}\in \P_{m_k}(\tilde v^{(\ell)})$, there exist $w_{i,j}^{(k+1,t)}\in \RR$ such that
\[u^{(t)}_{k+1,j}=\sum_{i=1}^{n_k} w_{i,j}^{(k+1,t)} \tilde r_{k,i}\]
for all $t=1,\ldots, s_{k+1}$. Therefore,
 \[ \tilde r_{k+1,j}=\sum_{i=1}^{n_k}(w^{(k+1,1)}_{i,j},\ldots, w^{(k+1,s_{k+1})}_{i,j}) \left(
                                               \begin{array}{cccc}
                                                 \tilde r_{k,i} & 0 & 0& 0\\
                                                 0 & \tilde r_{k,i}& 0& 0\\
                                                 \vdots& \vdots& \ddots& \vdots\\
                                                 0& 0& 0& \tilde r_{k,i}
                                               \end{array}
                                             \right).\]
 Set $W^{(1)}_{1,j}=\tilde r_{1,j}\in \RR^{s_1}$, $j=1,\ldots, \tilde n_1$ and
 \[W^{(k)}_{i,j}=(w_{i,j}^{(k,1)}, \ldots, w^{(k,s_k)}_{i,j})\in \RR^{s_k},\]
 for $k=2,\ldots,J$, $i=1,\ldots, \tilde n_{k-1}$, $j=1,\ldots, \tilde n_k$.
 Then the implement of multi-channel convolutions with the above filters $\{W^{(k)}_{i,j}\}$ and strides $s_k$ will give us
  \[h_J(\tilde x)_\ell=\langle \tilde x, \tilde v^{(\ell)}\rangle, \ \ell=1,\ldots, m. \]
  On the other hand, the definition \eqref{def-convolution} implies that $h_J(x)=h_J(\tilde x)$.
\end{proof}

%
%
%
%
%

\subsection{The case of 2D CNNs}

In this section, we turn to establish similar analysis for 2D DCNN, which will be done by connecting it to the 1D case. To accomplish it,
 we need the following concept of hierarchical partitions of matrices.

Denote $[n]=\{1,2,\ldots, n\}$ for any integer $n$.
For the index set $\Lambda=[d]\times [d]$ and a fixed resolution $J$,
let $\{\mathcal{B}_j\}_{j=0}^J$ be a family of subsets of $2^\Lambda$. We call  $\{\mathcal{B}_j\}_{j=0}^J$
a \emph{hierarchical partition} of $\Lambda$ with resolution $J$ if the following two conditions are satisfied:
\begin{enumerate}
	\item[{a)}] Root property: $\mathcal{B}_0 = \{\Lambda\}$ and each $\mathcal{B}_j$
	is a disjoint partition of $\Lambda$.
	\item[{b)}] Nested property: for any $j=1,\ldots, J$ and any (child) set $A \in \mathcal{B}_j$, there exists a (parent) set $B \in \mathcal{B}_{j-1}$ such that $A \subseteq B$. In other word, partition $\mathcal{B}_j$ is a refinement of partition $\mathcal{B}_{j-1}$.
\end{enumerate}
%
  For convenience, given a sequence of integers $s_1,\ldots, s_J\geq 2$,
  we set $d=\prod_{j=1}^J s_j$.
   For every $j=1,\ldots,J$, we take the hierarchical  partition by refining each  element in $\CB_{j-1}$  into $s_j^2$ sub-blocks of equal size in $\CB_{j}$. Furthermore, we will label sets in $\CB_j$ in the order of left to right and then top to bottom. Precisely, we make
\[
\CB_j=\{R_{\vec v}\subseteq \Lambda \setsep {\vec v\in I_j}\},
\]
where $I_j:= [s^2_1]\times\cdots\times[s^2_j]$, such that $ R_{(\vec v,t)}\subseteq R_{\vec v}$ for all $\vec v\in I_{j-1}$, $t\in[s_j^2]$ and $j=1,\ldots, J$. In this way, we denote by $\lambda(i,j):=(t_1,\ldots, t_J)$ such that  $ R_{(t_1,\ldots, t_J)}=\{(i,j)\}$.
 Based on the hierarchical partitions,  we can label each index $(i,j)\in \Lambda$ by {\bf partition  vectors}:
$$\lambda(i,j):=(t_1,\ldots, t_J)$$
such that  $ R_{(t_1,\ldots, t_J)}=\{(i,j)\}$. Furthermore, we define $\delta: [d]\times [d]\to [d^2]$ by
\[\delta(i,j)=1+\sum_{k=1}^J [\lambda(i,j)_k-1]\tilde s_k^2,\]
and  a bijective mapping $T: \RR^{d\times d}\to \RR^{d^2}$ from a matrix to its vectorization by $Y_{i,j}\mapsto \tilde Y_{\d(i,j)}$,
where $\tilde s_j=d/(s_1s_2\cdots s_{j})$, $j=1,\ldots,J$. Refer to Figure~\ref{acts1} for an illustration of our notations.

 %
%

      	\begin{figure}[http]
        		\centering
        		\includegraphics[width=8cm]{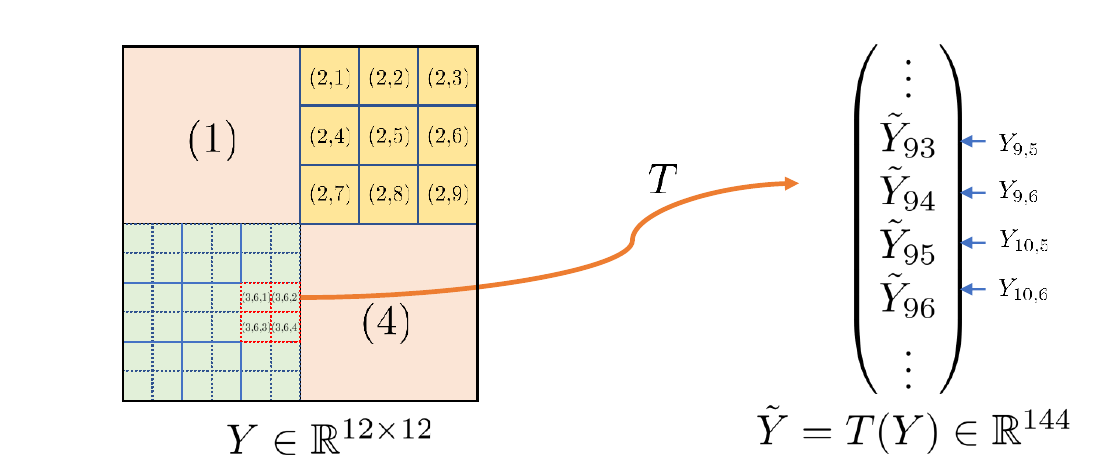}
        		\caption{Visualization of mapping $T$ for a kernel sequence $(2,3,2)$. Then $d=2\cdot 3\cdot 2=12$, $\tilde s_1=12/2=6$, $\tilde s_2=12/(2\cdot 3)=2$, $\tilde s_3=1$. For element $Y_{9,5}$,  $\lambda(9,5)=(3,6,1)$, $\d(9,5)=1+(3-1)6^2+(6-1)2^2+(1-1)1^2=93$, $Y_{9,5} \overset{T}\longmapsto \tilde Y_{93}$.}
        		\label{acts1}
        	\end{figure}
%

\begin{prop}\label{prop3}
Let $s_1,\ldots, s_J$ be integers and $d=\prod_{j=1}^J s_j$. For any given kernels $K_j\in \RR^{s_j\times s_j} $, $j=1,\ldots, J$,  and a 2D-signal $Y\in  \RR^{2^J\times 2^J}$,
  \begin{align}\label{2dconv}
  	\begin{aligned}
  		& \left(K_J\circledast_{s_J} \cdots\circledast_{s_3}  \left( K_2 \circledast_{s_2} \left( K_1 \circledast_{s_1}Y \right) \right) \right)  \\
  		=&  \left(   \tilde K_J*_{s_J^2}\cdots *_{s_2^2} \left( \tilde K_1*_{s^2_1} \tilde Y  \right) \right)
  	\end{aligned}
\end{align}
where $\tilde M$ is the reshaped vector of $M$ with respect to the above partition way.
\end{prop}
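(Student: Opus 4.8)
The plan is to prove the identity by induction on the depth $J$, with the inductive step supplied by a single ``one‑layer commutation'' between a 2D convolution and a 1D convolution under the reshaping map. Concretely, the statement I would isolate first is: for $Y\in\RR^{d\times d}$ with $d=\prod_{j=1}^J s_j$ and any kernel $K\in\RR^{s_1\times s_1}$,
\begin{equation}\label{eq:onelayer}
\tilde K\,*_{s_1^2}\,\tilde Y\;=\;\big(\text{reshaping of } K\circledast_{s_1}Y\big),
\end{equation}
where $\tilde K\in\RR^{s_1^2}$ is the ``left‑to‑right then top‑to‑bottom'' vectorization of $K$, $\tilde Y\in\RR^{d^2}$ is the reshaping attached to the hierarchical partition with kernel sequence $(s_1,\dots,s_J)$, and on the right one takes the $(d/s_1)\times(d/s_1)$ matrix $K\circledast_{s_1}Y$ and reshapes it according to the \emph{shortened} hierarchical partition with kernel sequence $(s_2,\dots,s_J)$.

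Granting \eqref{eq:onelayer}, the Proposition follows immediately. The base case $J=1$ holds because, for $Y\in\RR^{s_1\times s_1}$, the scalar $K_1\circledast_{s_1}Y$ is nothing but $\langle \tilde K_1,\tilde Y\rangle$, and a one‑dimensional convolution of kernel size $s_1^2$ and stride $s_1^2$ applied to a signal of length $s_1^2$ returns exactly that inner product. For the inductive step, set $Y^{(1)}=K_1\circledast_{s_1}Y\in\RR^{(d/s_1)\times(d/s_1)}$; then \eqref{eq:onelayer} identifies $\tilde K_1*_{s_1^2}\tilde Y$ with the reshaping of $Y^{(1)}$ relative to the partition with kernels $(s_2,\dots,s_J)$, and applying the Proposition in depth $J-1$ to $Y^{(1)}$ with kernels $K_2,\dots,K_J$ gives
$$K_J\circledast_{s_J}\cdots\circledast_{s_2}Y^{(1)}\;=\;\tilde K_J*_{s_J^2}\cdots*_{s_2^2}\big(\tilde K_1*_{s_1^2}\tilde Y\big),$$
whose left side is $K_J\circledast_{s_J}\cdots\circledast_{s_1}Y$, which is the desired equality.

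To prove \eqref{eq:onelayer} I would run a direct index computation resting on the self‑similarity of the hierarchical partition. The key bookkeeping fact is a recursion for the index map $\delta$: if the entry $(i,j)\in[d]\times[d]$ lies in the $s_1\times s_1$ convolution patch labelled $(p,q)$ at within‑patch coordinates $(i',j')\in[s_1]\times[s_1]$, then
$$\delta(i,j)\;=\;\bigl(\delta'(p,q)-1\bigr)s_1^{2}+(i'-1)s_1+j',$$
with $\delta'$ the index map of the shortened partition on the ``grid of patches'' $[d/s_1]\times[d/s_1]$. In other words, $\tilde Y$ is the concatenation of the row‑major vectorizations of the $s_1\times s_1$ patches of $Y$, taken in the order $\delta'$ assigns to those patches. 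Feeding this into the definition of $*_{s_1^2}$, the $P$‑th output coordinate of $\tilde K*_{s_1^2}\tilde Y$ reads off exactly the $s_1^{2}$ entries of $\tilde Y$ coming from the patch with $\delta'$‑label $P$, paired against $\tilde K$ in matching order, and by the definitions of $\circledast_{s_1}$ and of $\tilde K$ this is precisely the $P$‑th entry of the reshaping of $K\circledast_{s_1}Y$, giving \eqref{eq:onelayer}.

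The substantive difficulty is entirely in that $\delta$‑recursion. One must check that the length‑$s_1^{2}$, stride‑$s_1^{2}$ windows of the 1D convolution are aligned exactly with the blocks of $\tilde Y$ produced by single 2D patches, and that the internal ordering inside a window matches the row‑major vectorization of the corresponding patch; this is where the specific sub‑block labelling convention and the choice of place values $\tilde s_k^{2}$ in the definition of $\delta$ get used, and it is what forces the reshaping to be exactly the one constructed above rather than a naive row‑major vectorization. The cleanest instance is the constant filter‑size case of the statement (all $s_j=2$, $d=2^J$), where the patches coincide with the finest non‑trivial partition blocks; the mixed‑size case is handled in the same spirit using the more flexible partitions already employed in the previous subsection.
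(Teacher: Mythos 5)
Your strategy is sound and, in the constant filter-size case that the paper actually proves, it gives a complete argument; but it runs the induction in the opposite direction from the paper, so the two proofs are genuinely different. The paper restricts at the outset to $s_1=\cdots=s_J=2$, $d=2^J$ ("without losing generalization") and peels off the \emph{last} kernel: it splits the input $Y$ into the four coarsest partition blocks $Z_1,\ldots,Z_4$, applies the inductive hypothesis at full depth to each block (each producing a scalar), observes that $\tilde Y$ is the concatenation $(\tilde Z_1^T,\ldots,\tilde Z_4^T)^T$ so the first $m$ one-dimensional strided convolutions act blockwise, and finishes with the single outer convolution $\tilde K_{m+1}*_4$. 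You instead peel off the \emph{first} kernel via a one-layer intertwining identity between $\circledast_{s_1}$ and $*_{s_1^2}$ under the reshaping, and then recurse on the convolved output $Y^{(1)}=K_1\circledast_{s_1}Y$. Both arguments ultimately rest on the same index bookkeeping; your version isolates it in the stated recursion for $\delta$, while the paper's version hides it in the two "blockwise" facts about quadrants and concatenation, which are nearly immediate when every $s_j=2$. Your route has the advantage of making explicit exactly where the vectorization $T$ is used and of being phrased for general $s_1,\ldots,s_J$; the paper's route is shorter in the special case it treats.

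One caveat you should resolve before claiming the mixed-size case. With the paper's definition of the hierarchical partition (level $j$ refines by $s_j^2$, so the coarsest split is by $s_1$ and the finest nontrivial blocks have side $\tilde s_{J-1}=s_J$), consecutive windows of length $s_1^2$ in $\tilde Y$ are in general \emph{not} the $s_1\times s_1$ convolution patches, and the induced ordering of the patch grid is the one attached to $(s_1,\ldots,s_{J-1})$, not to $(s_2,\ldots,s_J)$ as you assert. For example, with $(s_1,s_2,s_3)=(3,2,2)$ and $d=12$, the first window of $\tilde Y$ consists of two $2\times 2$ blocks plus one extra entry, not the top-left $3\times 3$ patch, so your $\delta$-recursion fails under that convention; even for the palindromic sequence $(2,3,2)$ of Figure 1 the intermediate reshaping is the $(2,3)$ one rather than the $(3,2)$ one. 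The fix is the convention your bookkeeping implicitly assumes: build the hierarchy in the \emph{reverse} order of the convolutions (refine by $s_J$ first and by $s_1$ last), so that the finest blocks are exactly the first-layer patches and the output is reshaped by the shortened sequence $(s_2,\ldots,s_J)$. Since the paper only proves $s_j\equiv 2$ (its statement even writes $Y\in\RR^{2^J\times 2^J}$), your argument matches everything that is actually established there; just state the partition orientation explicitly if you want the genuinely mixed-size claim.
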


 It shows that taking 2D convolutions can be equivalent to corresponding 1D convolutions.

 \begin{proof} To be simple and without losing generality, we only consider the case for $s_1=s_2=\cdots=s_J=2$ and $d=2^J$.
 Then for any $2\times 2$ kernel $K=(K_{i,j})$, $\tilde K=(K_{1,1}, K_{1,2}, K_{2,1}, K_{2,2})^T$ and for $Y\in \RR^{d\times d}$, $\tilde Y\in \RR^{d^2}$ with $\tilde Y_{T(i,j)}=Y_{i, j}$ for $(i,j)\in [d]\times [d]$.

 It is obvious that \eqref{2dconv} holds for $J=1$. Suppose it holds for any $J\leq m$. If we divide $Y\in \RR^{2^{m+1}\times 2^{m+1}}$ into four blocks $Z_{1}=(Y_{i,j})_{(i,j\in R_{(1)})}$, $Z_{2}=(Y_{i,j})_{(i,j\in R_{(2)})}$, $Z_{3}=(Y_{i,j})_{(i,j\in R_{(3)})}$ and $Z_{4}=(Y_{i,j})_{(i,j\in R_{(4)})}$, then
 for each $t=1,\ldots, 4$, $Z_t\in \RR^{2^m\times 2^m}$ and thus
 \begin{equation}\label{2dconv_m}
 \left(  K_m\circledast_2\cdots\circledast_2 \left( K_1\circledast_2Z_t \right) \right)= \left(  \tilde K_m *_4\cdots *_4 \left( \tilde K_1*_4\tilde Z_t \right) \right)   \in \RR.
 \end{equation}
 On the other hand, by the definition of operator $T$, $\tilde Y=(\tilde Z_1^T,\tilde Z_2^T,\tilde Z_3^T,\tilde Z_4^T)^T$. It implies that
 \[ \left( \tilde K_m*_4\cdots*_4 \left( \tilde K_1*_4\tilde Y  \right) \right) =(W_1, W_2, W_3, W_4)^T\]
 where $W_t=  \left(  \tilde K_m*_4\cdots*_4  \left( \tilde K_1*_4\tilde Z_t  \right) \right)$. By \eqref{2dconv_m} and the fact that \[ \left( \tilde K_{m+1}*_4 \left( \tilde K_m *_4\cdots*_4 \left(\tilde K_1*_4\tilde Y \right)\right)\right)=\C(\tilde K_{m+1}*_4W_1  , \tilde K_{m+1}*_4W_2   , \tilde K_{m+1}*_4W_3  ,\tilde K_{m+1} *_4W_4   ),  \] we have \eqref{2dconv} holds for $J=m+1$ and therefore for any integer $J$.
 \end{proof}

In the following corollary, applying the analysis of 1D convolutions, we show that 2D convolutions can extract singular values of a matrix.
	\begin{cor}\label{cor4}
		Let $X\in \RR^{d\times d}$ be a matrix with $d=2^J$ and $r=\rank(X)$. If $\alpha_1\geq \alpha_2\geq \cdots \geq \alpha_r>0$ are nonzero singular values of $X$, then there exist convolutional layers of $\{h_j\}$ such that
		\[h_J(X)_\ell=\alpha_{\ell}, \  \ell=1,\ldots, r.\]
	\end{cor}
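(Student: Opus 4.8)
The plan is to reduce the corollary to the linear feature extraction machinery already set up, namely \thmref{flexible kernel} and \propref{prop3}, using the singular value decomposition to turn singular values into (Frobenius) inner products.

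First I would write the reduced singular value decomposition
\[
X=\sum_{\ell=1}^{r}\alpha_\ell\, u_\ell v_\ell^{T},
\]
with orthonormal systems $\{u_\ell\}_{\ell=1}^{r}$ and $\{v_\ell\}_{\ell=1}^{r}$ in $\RR^{d}$. Since $Xv_k=\alpha_k u_k$, one has
\[
\langle X,\,u_k v_k^{T}\rangle_{F}=\operatorname{trace}\!\big(v_k u_k^{T}X\big)=u_k^{T}Xv_k=\alpha_k,\qquad k=1,\ldots,r,
\]
where $\langle\cdot,\cdot\rangle_{F}$ is the Frobenius (entrywise) inner product. Thus it suffices to build a depth-$J$ 2D DCNN that, fed the matrix $X$, outputs in its $\ell$th channel the Frobenius inner product of $X$ with the fixed matrix dictionary $V^{(\ell)}:=u_\ell v_\ell^{T}\in\RR^{d\times d}$, $\ell=1,\ldots,r$; that is, extracting singular values is an instance of linear feature extraction for a (data-dependent) dictionary of rank-one patterns.

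Next I would pass to one dimension through the vectorization $T\colon\RR^{d\times d}\to\RR^{d^{2}}$ attached to the hierarchical partition into $2\times2$ sub-blocks that underlies \propref{prop3} (here $s_1=\cdots=s_J=2$, $d=2^{J}$, and $d^{2}=4^{J}$). Because $T$ only permutes the $d^{2}$ entries of a matrix, it preserves Euclidean inner products, so $\langle X,V^{(\ell)}\rangle_{F}=\langle T(X),T(V^{(\ell)})\rangle$. I then apply \thmref{flexible kernel} in dimension $d^{2}=4^{J}$ with the constant filter sizes $s_1=\cdots=s_J=4$ and the dictionary $\{T(V^{(\ell)})\}_{\ell=1}^{r}\subset\RR^{d^{2}}$: this produces a depth-$J$ 1D multi-channel DCNN $\{g_j\}_{j=1}^{J}$ with filter size $4$ and stride $4$, whose $\ell$th output channel equals $\langle T(X),T(V^{(\ell)})\rangle=\alpha_\ell$.

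Finally I would transport $\{g_j\}$ back to two dimensions. Every filter appearing in $\{g_j\}$ lies in $\RR^{4}=\RR^{2^{2}}$; reshaping each of them into a $2\times2$ filter by the bijection of \propref{prop3}, layer by layer and channel by channel, converts $\{g_j\}$ into a depth-$J$ 2D multi-channel DCNN $\{h_j\}_{j=1}^{J}$ with filter size $2$ and stride $2$ for which $h_J(X)_\ell=g_J(T(X))_\ell=\alpha_\ell$, $\ell=1,\ldots,r$, as required. The step I expect to be the main obstacle is exactly this transfer: \propref{prop3} is proved for a single chain of convolutions, whereas $\{g_j\}$ has several channels per layer and carries biases (chosen, as in the remark following Definition~\ref{1dcnn}, so that every ReLU acts as the identity on the relevant range). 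One has to upgrade \propref{prop3} to this multi-channel, biased setting, i.e.\ verify that the layerwise vectorizations intertwine one application of a $2\times2$ stride-$2$ convolution with one application of a length-$4$ stride-$4$ convolution, and that they also commute with the sums over input channels, with the addition of biases, and with the (here linear) activations. All of these commutations hold because the vectorizations are coordinate permutations — hence linear and order-preserving on entries — and because every activation in the construction is linear, so the remaining work is only bookkeeping.
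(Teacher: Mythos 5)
Your proposal is correct and follows essentially the same route as the paper: SVD produces the rank-one dictionary $V^{(\ell)}=u_\ell v_\ell^{T}$ whose Frobenius inner products with $X$ are the singular values, and the 1D feature-extraction theorem combined with the vectorization of \propref{prop3} then yields the desired 2D convolutional layers. You are in fact more careful than the paper, which silently uses the multi-channel, biased version of \propref{prop3} that you explicitly flag as needing (routine) verification.
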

	
	\begin{proof}
		By singular value decomposition, there exist  matrices $U\in \RR^{d\times r}$ and $V\in \RR^{d\times r}$ with orthonormal columns such that
		\[X= U\Sigma V^*=\sum_{\ell=1}^r \alpha_\ell u_\ell v^*_\ell,\]
		where $\Sigma=\Diag_r(\alpha_\ell)\in \RR^{r\times r}$ and $u_\ell, v_\ell$ are columns of $U,V$.
		
		Define $\la A,B \ra:= \sum_{i,j}A_{ij}B_{ij}$ for $A,B\in\RR^{d\times d}$.
		Then making $V^{(\ell)}=u_\ell v^*_\ell\in \RR^{d\times d}$, applying above result, we have $$\la V^{(\ell)}, X\ra= \sum_{i=1}^r \alpha_i \la u_i v_i^*, u_\ell v_\ell^*  \ra = \sum_{i=1}^r \alpha_i \sum_{m}^{d} (u_i)_m (u_\ell)_m \sum_{n}^{d} (v_j)_n^* (v_\ell)_n^*  = \alpha_{\ell},$$ and thus by \propref{prop3}, there exist 2D convolutional layers $\{h_j\}_{j=1}^J$ such that $h_J(X)_\ell=\la V^{(\ell)}, X\ra=\alpha_\ell$.
	\end{proof}

	
	\begin{remark}
		\corref{cor4} sheds light on the potential of 2D CNNs and theoretical support for their success in image restoration tasks. For example, we can view 2D CNNs as a low rank approximation, which is of wide usage, such as in data compression, as they leverage the low rank assumption to capture essential structural information within a matrix/image denoted as $X$. The Eckart-Young-Mirsky theorem plays a fundamental role in this context, demonstrating that the best rank $k$ approximation to $X$ under Frobenius norm can be expressed as $X_k = \sum_{\ell=1}^k \alpha_\ell u_{\ell} v_{\ell}^*$ for $k \leq r$. When singular values of $X$ are acquired using convolutional layers, subsequent layers, including fully connected layers and other network architectures, can utilize these values or play a role in mapping singular values back to its low rank approximation for various image restoration tasks.
	\end{remark}
	
	\section{Approximation Analysis}
	
	In this section, applying the above analysis, we provide approximation estimates of functions which are essentially defined in a lower dimensional space. The approximation will be conducted by convolutional layers and one (or two) fully connected layers. To be simple, our focus is only on the case of convolutions of kernel size 2 and stride 2. Precisely, given input dimension $d$ and feature dimension $m$ with $d\gg m$, we denote by $\mathcal{H}(d,m, J)$ a set of convolutional layers $\{h_{j}\}_{j=1}^J$ as defined in the previous section with kernel size $2$, stride $2$ and the number of channels for $\{h_j\}_{j=1}^J$ being $2, 2^2, \ldots, 2^{k_0-1}, md/2^{k_0}, md/2^{k_0+1},\ldots , m$ respectively, where
	$$k_0=\mathop{\arg\min}_k\{md/2^k\leq 2^k\}=\lceil\log_4(md)\rceil.$$
Let $N(\mathcal{H}(d,m, J))$ be the total number of free parameters contained in $\mathcal{H}(d,m,J)$. Then
\begin{align*}
  N(\mathcal{H}(d,m, J))\leq& 2 [2+2^3+\cdots+2^{2k_0-3}]+2^{k_0}\f{md}{2^{k_0}}\\
  &+2(md)^2[\f{1}{2^{2k_0+1}}+2^{2k_0-1}+\f{1}{2^{2\log_2 d}}]\\
  \leq &\f 2 3(4^{k_0-1}-2)+md+\f 4 3\f{(md)^2}{4^{k_0}}\leq \f 4 3(4^{k_0}+\f{(md)^2}{4^{k_0}})+md\leq 8md,
\end{align*}
here in the last step we use the fact that $4^{k_0-1}\leq md\leq 4^{k_0}$. This bound increases linearly with respect to the dimension $d$, which is much smaller than the number of parameters in deep fully connected neural networks. In the following, we will show that with CNNs for feature extraction, a neural network for classification can approximate functions with smoothness or defined on manifolds well. Note that for any $\{v^{(\ell)}\}_{\ell=1}^m\subset \RR^d$, since
\[\dim \left(\bigcup_{\ell=1}^m\mathcal P(v^{(\ell)})\right)\leq \min\{2^k, md/2^k\}, \quad k=1,\ldots, \lceil\log_2 d\rceil,\]
there exists $\{h_j\}_{j=1}^J$ such that \eqref{inner-product} holds.

	Recall that the Sobolev space $H^q(\RR^m)$ on $\RR^m$ with $q\in\NN$ consists
	of all functions $f$ on $\RR^m$ such that all partial derivatives of $f$ up to order $q$ are square
	integrable on $\RR^m$ and $\|f\|_{H^{q}}=\sum_{|\alpha|_1\leq q}\|D^{\alpha} f\|_2$.
	
	The following lemma gives an approximation estimate of smooth functions by using shallow neural networks. It can be found from the proof of Theorem B in \cite{Hinton} and \cite{Barron}.
	\begin{lem}\label{Mao}
		Let $m \in \mathbb{N}$ and $f \in H^{q}\left(\mathbb{R}^{m}\right)$ with an integer index $q>\frac{m}{2}+2$. Then for every $N \in \mathbb{N}$, there exists a linear combination of ramp ridge functions of the form
		$$
		f_{N}(y)=\beta_{0}+\alpha_{0} \cdot y+\sum_{k=1}^{N} \beta_{k} \sigma\left(\alpha_{k} \cdot y-t_{k}\right)
		$$
		with $\beta_{k} \in \mathbb{R},\left\|\alpha_{k}\right\|_{1} \leq 1, t_{k} \in[0,1]$ such that
		$$
		\left\|f-f_{N}\right\|_{C\left([-1,1]^{m}\right)} \leq c_{0}\|f\|_{H^{q}\left(\mathbb{R}^{m}\right)} \sqrt{\log (N+1)} N^{-\frac{1}{2}-\frac{1}{m}}
		$$
		for some universal constant $c_{0}>0$.
	\end{lem}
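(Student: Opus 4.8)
The plan is the Fourier-analytic route of Barron, combined with a stratified sampling argument of Makovoz type (as used for ReLU ridge functions by Klusowski and Barron), and finally a covering-number union bound to pass from an $L^2$ estimate to the uniform estimate on the cube, which is what produces the $\sqrt{\log(N+1)}$ factor.

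\emph{Steps 1--2 (spectral moment and ramp representation).} Since $q>\frac m2+2>\frac m2$, $f\in H^q(\RR^m)$ is continuous with $\hat f\in L^1(\RR^m)$ (any fixed normalization of the Fourier transform $\hat f$), and by Cauchy--Schwarz, inserting $(1+\|\omega\|_2^2)^{\pm q/2}$,
\[
v_f:=\int_{\RR^m}\|\omega\|_1^2\,|\hat f(\omega)|\,d\omega\ \le\ c_m\,\|f\|_{H^q(\RR^m)},
\]
the auxiliary integral $\int_{\RR^m}\|\omega\|_1^4(1+\|\omega\|_2^2)^{-q}\,d\omega$ being finite exactly because $2q-4>m$. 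Then, using the representation of $e^{\mathrm i cu}-1-\mathrm i cu$ as a superposition $\int(\pm u-\tau)_+\,(\,\cdot\,)\,d\tau$ of shifted ramps (Taylor's formula with integral remainder), applied with $c=\|\omega\|_1$ and $u=(\omega/\|\omega\|_1)\cdot x$ (so $|u|\le 1$ when $x\in[-1,1]^m$) and integrating against $\hat f$, I would obtain
\[
f(x)=f(0)+\nabla f(0)\cdot x+\int_{\{\,\|a\|_1=1\,\}\times[0,1]}\sigma(a\cdot x-t)\,d\mu(a,t),\qquad x\in[-1,1]^m,
\]
for a signed measure $\mu$ with $\|\mu\|\le c\,v_f$ (the weight $\|\omega\|_1^2$ is exactly the second-derivative contribution, and $(\tau-u)_+=\sigma((-a)\cdot x-(-\tau))$ keeps thresholds in $[0,1]$ and inner weights on the $\ell^1$-unit sphere). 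The affine term is absorbed into $\beta_0+\alpha_0\cdot y$.

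\emph{Step 3 (stratified empirical approximation, then uniform norm).} Writing $\bar f:=f-(\text{affine})=\|\mu\|\int g_p\,d\pi(p)$ with $\pi=|\mu|/\|\mu\|$ and atoms $g_p(x)=\pm\sigma(a\cdot x-t)$, which satisfy $\|g_p\|_{C([-1,1]^m)}\le 2$ and $\|g_p-g_{p'}\|_{C([-1,1]^m)}\lesssim\|p-p'\|$, I would partition the parameter set (of covering dimension $m$) into $K\asymp N$ Voronoi cells of diameter $\rho\asymp N^{-1/m}$ and draw $N$ atoms with at least one sample per cell and the remaining $\asymp N$ allocated proportionally to the cell masses $\pi_k$. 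The resulting estimator $\widehat f$ is unbiased, uses exactly $N$ atoms, and, since the per-cell conditional variance is at most $\rho^2$, obeys $\mathbb E\|\widehat f-\bar f\|_{L^2([-1,1]^m)}^2\lesssim\|\mu\|^2\rho^2/N\asymp v_f^2N^{-1-2/m}$ --- crucially with no quantization bias, the diameter $\rho$ entering only the variance. Applying the same sub-Gaussian (Hoeffding) bound pointwise, taking a union bound over a $\delta$-net of $[-1,1]^m$ of size $\lesssim\delta^{-m}$, and using that $\widehat f-\bar f$ is $O(\|\mu\|)$-Lipschitz so the net-to-cube transfer costs $O(\|\mu\|\delta)$, I would get $\|\widehat f-\bar f\|_{C([-1,1]^m)}\lesssim v_f\big(\rho\sqrt{\log(1/\delta)/N}+\delta\big)$; the choice $\delta\asymp N^{-2}$ then yields $\|f-f_N\|_{C([-1,1]^m)}\lesssim c_0\|f\|_{H^q(\RR^m)}\sqrt{\log(N+1)}\,N^{-1/2-1/m}$, after renaming the (at most $cN$) atoms used as $N$.

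\emph{Main obstacle.} Steps 1--2 are routine Barron theory; the delicate point is Step 3, and specifically beating the naive Maurey rate $N^{-1/2}$. The key realization is that partitioning the ridge-atom family into $\asymp N$ cells and performing \emph{stratified} sampling (one sample per cell, the rest proportional) makes the cell diameter $\rho\asymp N^{-1/m}$ appear as a multiplicative factor on the sampling error \emph{without} creating a matching quantization bias --- this is exactly what produces the exponent $-\tfrac12-\tfrac1m$ --- and this has to be coupled carefully with the union-bound/Lipschitz argument controlling the supremum over $[-1,1]^m$ (the origin of the $\sqrt{\log(N+1)}$) rather than merely an $L^2$ average. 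Only existence with a universal constant is claimed, so no constants need be optimized.
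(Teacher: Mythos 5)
The paper does not actually prove this lemma --- it is imported by citation from Klusowski--Barron's result on approximation by ReLU ridge combinations --- and your proposal reconstructs essentially that cited argument: the finite second spectral moment $\int\|\omega\|_1^2|\hat f(\omega)|\,d\omega\lesssim\|f\|_{H^q}$ from $q>\frac m2+2$ via Cauchy--Schwarz, the ramp-ridge integral representation with $\|\alpha\|_1\le1$, $t\in[0,1]$ from Taylor's remainder, and the stratified (Makovoz-type) sampling over a partition of the parameter sphere combined with a net/union-bound step for the uniform norm, which is exactly where the extra $N^{-1/m}$ and the $\sqrt{\log(N+1)}$ factor arise. The sketch is sound and matches the cited proof in structure; the only minor caveat is that the constant inherits $m$-dependence from the Cauchy--Schwarz step, as it does in the source.
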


	\begin{thm}\label{1D-approx}
		Let $J, m, d\geq 1$ be integers. 
		For any integer $n$ and a function $F:X\to \RR$ in the form of
		$F(x)=f(Vx)$ with $V$ being a matrix of size $m\times d$ and $f\in H^{q}\left(\mathbb{R}^{m}\right)$ with an integer index $q>\frac{m}{2}+2$,
		there exist convolutional layers $\{h_{j}\}_{j=1}^J\in \mathcal H(d,m, J)$ with $J=[\log_2 d]$ and a shallow neural network $\Psi: \RR^m\to \RR$ with width $n$ activated by ReLU such that
		$$\|\Phi(x)-F(x)\|_{C([-1,1]^m)}\leq C \|F\|_{H^{q}\left(\mathbb{R}^{m}\right)} n^{-1/2},$$
		where $\Phi(x)=\Psi(h_J(x))$.
	\end{thm}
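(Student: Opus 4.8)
\emph{Proof plan.} I would split the network as $\Phi=\Psi\circ h_J$, where the convolutional part $h_J$ reproduces the linear features $x\mapsto Vx$ \emph{exactly} and the shallow part $\Psi$ approximates the smooth function $f$ on the feature box $[-1,1]^m$. Writing $V$ through its rows $v^{(1)},\dots,v^{(m)}\in\RR^d$, we have $F(x)=f\big(\langle x,\bar v^{(1)}\rangle,\dots,\langle x,\bar v^{(m)}\rangle\big)$. Taking $\{v^{(\ell)}\}_{\ell=1}^m$ as the dictionary and applying \thmref{1D-extractation} gives a multi-channel DCNN of depth $J=[\log_2 d]$ with constant filter size $2$ and stride $2$ whose last layer has $m$ channels and satisfies \eqref{inner-product}, i.e.\ $h_J(x)=Vx$ for all $x$ (when $d$ is not a power of $2$ one pads with zeros as in the proof of \thmref{flexible kernel}). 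The dimension bound $\dim\big(\bigcup_{\ell=1}^m\P_{2^k}(\bar v^{(\ell)})\big)\le\min\{2^k,\,md/2^k\}$ recorded just before the statement shows the intermediate channel widths can be taken to be $2,2^2,\dots,2^{k_0-1},md/2^{k_0},\dots,m$, so $\{h_j\}_{j=1}^J$ may be chosen inside $\mathcal H(d,m,J)$.

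Next I would apply \lemref{Mao} to $f\in H^q(\RR^m)$, $q>\frac{m}{2}+2$, with a parameter $N$ to be fixed: there is $f_N(y)=\beta_0+\alpha_0\cdot y+\sum_{k=1}^N\beta_k\,\sigma(\alpha_k\cdot y-t_k)$ with $\|f-f_N\|_{C([-1,1]^m)}\le c_0\|f\|_{H^q(\RR^m)}\sqrt{\log(N+1)}\,N^{-1/2-1/m}$. Since $f_N$ is one ramp-ridge layer plus an affine skip term, I would turn it into a plain one-hidden-layer ReLU network by realizing the affine part through the identity $\beta_0+\alpha_0\cdot y=\sigma(\alpha_0\cdot y+c)-c$, valid on $[-1,1]^m$ with $c=\|\alpha_0\|_1+|\beta_0|$, and producing the residual constant $-c$ with one further unit $\sigma(c)\equiv c$; this costs only $O(1)$ extra neurons. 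This yields a shallow ReLU network $\Psi$ of width $N+O(1)$ with $\Psi\equiv f_N$ on $[-1,1]^m$, so choosing $N$ equal to $n$ minus that additive constant makes $\Psi$ have width at most $n$ (for $n$ below that constant the asserted bound is vacuous after enlarging $C$).

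Finally, with $\Phi(x)=\Psi(h_J(x))$ and $h_J(x)=Vx$, for every $x$ with $Vx\in[-1,1]^m$ one gets $|\Phi(x)-F(x)|=|f_N(Vx)-f(Vx)|\le\|f_N-f\|_{C([-1,1]^m)}$, and since $\sqrt{\log(N+1)}\,N^{-1/m}$ is bounded uniformly in $N\ge1$ for fixed $m$, the \lemref{Mao} estimate collapses to $C\|F\|_{H^q(\RR^m)}\,n^{-1/2}$. The step carrying the real content is the first one---placing the feature-extracting CNN inside the prescribed architecture $\mathcal H(d,m,J)$ with the listed channel counts---which is exactly what \thmref{1D-extractation} together with the pre-stated dimension estimate delivers; the remainder is routine, namely the ReLU-identity bookkeeping for the affine term and the elementary observation that the rate $\sqrt{\log(N+1)}\,N^{-1/2-1/m}$ downgrades to the claimed $n^{-1/2}$.
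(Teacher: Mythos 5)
Your proposal follows essentially the same route as the paper: invoke \thmref{1D-extractation} on the rows of $V$ so that $h_J(x)=Vx$ exactly, then apply \lemref{Mao} to $f$ and compose, downgrading the rate $\sqrt{\log(N+1)}\,N^{-1/2-1/m}$ to $n^{-1/2}$; in fact you supply bookkeeping (zero-padding for non-dyadic $d$, fitting the channels into $\mathcal H(d,m,J)$, realizing the affine term of $f_N$ by ReLU units) that the paper's terse proof glosses over. Only a trivial slip: in your identity for the affine part, $\beta_0$ should be absorbed into the shifted argument, e.g.\ $\beta_0+\alpha_0\cdot y=\sigma(\alpha_0\cdot y+\beta_0+c)-c$ with $c=\|\alpha_0\|_1+|\beta_0|$.
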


	\begin{proof}

		For all features $\{v^{(\ell)}\}$ that are rows of $V$,
		applying Theorem~\ref{1D-extractation}, there exists a multi-channel DCNN $\{h_j\}_{j=1}^J$ such that
		\[h_J(x)_\ell=\langle x, v^{(\ell)}\rangle, \ \ell=1,\ldots,m,\]
		which implies that $h_J(x)$ coincide with $Vx$.
		Furthermore, by \lemref{Mao}, there is a fully connected ReLU layer $\Psi: \RR^{m}\to \RR$ with $n$ hidden neurons such that
		\[|\Psi(y)-f(y)|\leq  C \|f\|_{H^{q}\left(\mathbb{R}^{m}\right)} n^{-1/2}.\]
	\end{proof}

Our next approximation analysis is for functions in a high dimensional space which are essentially supported on a much lower-dimensional manifold.		
	Our next approximation analysis is for functions in a high dimensional space which are essentially supported on finitely many low-dimensional hyperplanes. It is worthwhile to note that given the number of hyperplanes $T$ and the largest dimension $m$, for $t=1,\ldots, T$,
	\[Z_t=\{\sum_{\ell=1}^m a^{(t)}_\ell v^{(t,\ell)}+u^{(t)}: (a^{(t)}_1,\ldots, a^{(t)}_m)\in [0,1]^m\}\]
	where $\{v^{(t,\ell)}\}_{\ell=1,\ldots,m}$ is orthonormal for any fixed $t=1,\ldots, T$, one can find $\tilde u^{(t)}\in \RR^d$ such that
	\[Z_t=\{\sum_{\ell=1}^m a^{(t)}_\ell v^{(t,\ell)}+\tilde u^{(t)}: (a^{(t)}_1,\ldots, a^{(t)}_m)\in [2t-1,2t]^m, t=1,\ldots, T \}.\]
	Additionally, we say $\{Z_t\}$ is well-separated with radius $\mu>0$ if there exist $z^{(t)}\in \RR^d$  such that
	\begin{equation}\label{well-separated}
	\|x-z^{(t)}\|^2-\|x-z^{(t_0)}\|^2\geq \mu, \quad \text{if}\ x\in Z_{t_0}, t\neq t_0.
	\end{equation}
	With the above notation, setting $\Omega_t=[2t-1,2t]^m$, $t=1,\ldots, T$, and $\Omega=[1,2T]^m$ we have the following result.
	
	\begin{figure}[http]
		\centering
		\subfigure[]{\includegraphics[width=8cm]{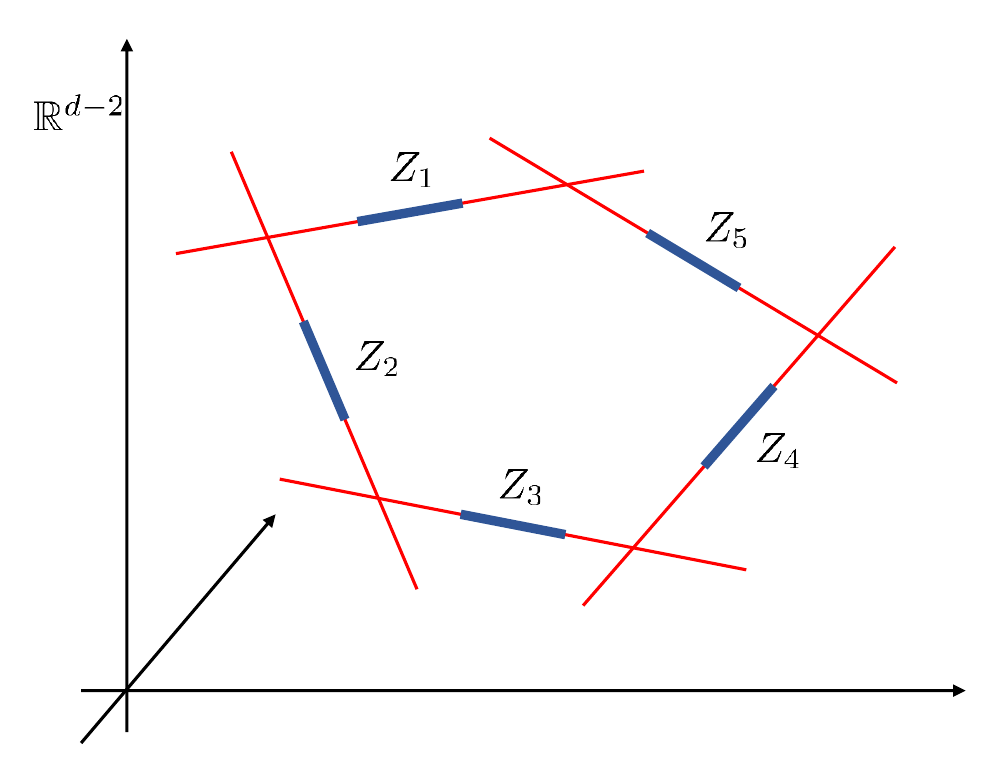}}
		\caption{Affine spaces.}
		\label{acts}
	\end{figure}
	
	\begin{thm}\label{thm:affine}
		Let $d, m, J\geq 1$ be integers, $X=\bigcup_{t=1}^T Z_t$, and $f\in H^{q}\left(\mathbb{R}^{m}\right)$ with an integer index $q>\frac{m}{2}+2$.
		Suppose $\{Z_t\}$ is well-separated with radius $\mu>0$ and a function $F:X\to \RR$ is defined by $F(x)=f(a^{(t)}_1,\ldots, a^{(t)}_m)$ for $x=\sum_{\ell=1}^m a^{(t)}_\ell v^{(t,\ell)}+u^{(t)}$. Then
		there exist convolutional layers $\{h_{j}\}_{j=1}^J\in \mathcal H(d,m, J)$ with $J=[\log_2 d]$ and a fully connected network $\Psi$ with residual connection and width $2T^2$, $mT$, $n$ activated by ReLU such that
		$$\|\Phi(x)-F(x)\|\leq C n^{-1/2}, \quad \forall x\in X,$$
		where $\Phi(x)=\Psi(h_J(x))$.
	\end{thm}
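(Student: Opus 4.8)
The plan is to assemble $\Phi=\Psi\circ h_J$ from three stages: first the convolutional part $\{h_j\}_{j=1}^J$ should extract \emph{all} the linear features the rest of the network will need; then the two residual fully connected layers should act as a \emph{router} that, using the well-separatedness \eqref{well-separated}, detects the unique index $t_0$ with $x\in Z_{t_0}$ and outputs the corresponding local coordinates $a^{(t_0)}(x)$; finally the shallow network $\Psi$ should approximate $f$ on the coordinate cube via \lemref{Mao}. Concretely, for the first stage I would apply \thmref{flexible kernel} (equivalently \thmref{1D-extractation} after zero-padding to dimension $2^{J}$, $J=[\log_2 d]$) to the enlarged dictionary $\{v^{(t,\ell)}\}_{t\le T,\,\ell\le m}\cup\{z^{(t)}\}_{t\le T}$, which has $mT+T$ vectors in $\RR^d$; this produces convolutional layers in $\mathcal H(d,\,mT+T,\,J)$ — still $O(mTd)$ free parameters — whose output records $\langle x,v^{(t,\ell)}\rangle$ and $\langle x,z^{(t)}\rangle$ in separate channels.

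For the router, the observation that makes everything linear is that the squared--distance gaps in \eqref{well-separated} are affine in $x$:
\[\|x-z^{(t)}\|^2-\|x-z^{(t_0)}\|^2=a_{t_0}(x)-a_t(x),\qquad a_s(x):=2\langle x,z^{(s)}\rangle-\|z^{(s)}\|^2,\]
so each $a_s$ is one affine map of the extracted features, and $x\in Z_{t_0}$ forces $a_{t_0}(x)\ge a_t(x)+\mu$ for all $t\neq t_0$. I would then set
\[\chi_{t_0}(x)=\sigma\Big(1-\tfrac1\mu\sum_{t\neq t_0}\sigma\big(a_t(x)-a_{t_0}(x)+\mu\big)\Big),\qquad t_0=1,\dots,T,\]
and check the two cases ($x\in Z_{t_0}$, giving $\chi_{t_0}(x)=1$; and $x\in Z_{t_1}$ with $t_1\ne t_0$, where the $t=t_1$ summand is $\ge 2\mu$ so $\chi_{t_0}(x)=0$) to conclude $\chi_{t_0}$ equals the indicator of $Z_{t_0}$ on all of $X$. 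The $T(T-1)$ inner units $\sigma(a_t-a_{t_0}+\mu)$ together with the $\chi_{t_0}$'s and the skip connections carrying the frame projections are meant to fill out the first residual block of width $2T^2-2T+mT$. Next, on $Z_{t_0}$ the original coordinates are $a^{(t_0)}_\ell(x)=\langle x-u^{(t_0)},v^{(t_0,\ell)}\rangle=:c^{(t_0)}_\ell(x)$, affine in the features, and with the finite constant $M:=\max_{x\in X,\,t_0,\ell}|c^{(t_0)}_\ell(x)|$ the identity
\[\chi_{t_0}(x)\,c^{(t_0)}_\ell(x)=\sigma\big(c^{(t_0)}_\ell(x)+M\chi_{t_0}(x)-M\big)-\sigma\big(-c^{(t_0)}_\ell(x)+M\chi_{t_0}(x)-M\big)\]
holds exactly on $X$ (using $\chi_{t_0}\in\{0,1\}$ and $|c^{(t_0)}_\ell|\le M$ there); forming these $mT$ products is the job of the second residual block of width $mT$, and the linear combination $b_\ell(x):=\sum_{t_0=1}^{T}\chi_{t_0}(x)c^{(t_0)}_\ell(x)$ — folded into the input layer of $\Psi$ — equals $a^{(t_0)}(x)\in[0,1]^m$ \emph{exactly} when $x\in Z_{t_0}$.

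For the last stage, since $b(x)=a^{(t_0)}(x)$ on $Z_{t_0}$ we get $F(x)=f(b(x))$ for every $x\in X$, so I would apply \lemref{Mao} to $f$ with $N=n$ to obtain a width-$n$ ramp--ridge network $\Psi_0$ with $\|f-\Psi_0\|_{C([-1,1]^m)}\le c_0\|f\|_{H^q}\sqrt{\log(n+1)}\,n^{-1/2-1/m}$, take $\Psi:=\Psi_0$ precomposed with the linear map producing $b$, and set $\Phi:=\Psi\circ h_J$. Since $b(x)\in[0,1]^m\subset[-1,1]^m$,
\[|\Phi(x)-F(x)|=|\Psi_0(b(x))-f(b(x))|\le c_0\|f\|_{H^q}\sqrt{\log(n+1)}\,n^{-1/2-1/m}\le Cn^{-1/2},\qquad x\in X,\]
because $\sqrt{\log(n+1)}\,n^{-1/m}$ is bounded; absorbing $\|f\|_{H^q}$ and the geometric constants ($T,\mu,M$) into $C$ gives the claim. (One may instead carry the recentred coordinates of $\Omega_t=[2t-1,2t]^m$ through the router and approximate, on $\Omega=[1,2T]^m$, a smooth extension $\tilde f$ with $\tilde f|_{\Omega_t}=f(\cdot-(2t-1)\mathbf 1)$, applying \lemref{Mao} after affinely rescaling $\Omega$ onto $[-1,1]^m$; the rate does not change.)

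The hard part is the router, i.e.\ turning the metric separation hypothesis \eqref{well-separated} into an \emph{exact} ReLU circuit on all of $X$: spotting that the squared-distance gaps are affine, choosing the thresholds inside $\chi_{t_0}$ so that the margin $\mu$ makes it $\{0,1\}$-valued with no error, realizing the ``indicator $\times$ coordinate'' selection by a ReLU gadget, and then packing these computations plus the skip connections into residual blocks of \emph{exactly} the prescribed widths $2T^2-2T+mT$ and $mT$. Everything else — invoking \thmref{flexible kernel}/\thmref{1D-extractation} for the features and \lemref{Mao} for $f$ — is routine, and because the router is exact the only error term is the $n^{-1/2}$ coming from approximating $f$.
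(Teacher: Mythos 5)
Your plan is essentially the paper's own proof: the CNN extracts the $mT+T$ linear features $\langle x,v^{(t,\ell)}\rangle$ and $\langle x,u^{(t)}\rangle$ (or $\langle x,z^{(t)}\rangle$) via Theorem~\ref{1D-extractation}, two ReLU layers turn the affine squared-distance gaps and the margin $\mu$ into an exact selection of the correct chart's coordinates, and Lemma~\ref{Mao} gives the $n^{-1/2}$ rate; your observation that the stated $\mathcal H(d,m,J)$ really needs $mT+T$ channels matches what the paper's proof implicitly uses. The one place your router deviates from what the statement's architecture allows is the gadget itself: your $\chi_{t_0}=\sigma\bigl(1-\tfrac1\mu\sum_{t\neq t_0}\sigma(\cdot)\bigr)$ is a ReLU of ReLUs, so it cannot sit in the same block as its inner comparators, and the two-sided product gadget costs $2mT$ units in the next block — as written you need three nonlinear layers and width $2mT$, not the prescribed two layers of widths $2T^2-2T+mT$ and $mT$. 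The paper avoids this by never forming a $\{0,1\}$ indicator: the first layer computes the $2T(T-1)$ clipped pairwise comparisons $\sigma(\cdot)-\sigma(\cdot-1)$ (plus the $mT$ pass-through features, giving exactly width $2T^2-2T+mT$), the second layer uses the \emph{linear} count $\sum_{t'}h_{J+1}(y)_{t,t'}$, which equals $T-1$ only for $t=t_0$, inside a single ReLU $\sigma\bigl(y_{t,\ell}-B(T-1-\sum_{t'}h_{J+1}(y)_{t,t'})-\langle u^{(t)},v^{(t,\ell)}\rangle\bigr)$, and the recentred coordinates in $[2t-1,2t]$ are nonnegative so one ReLU per $(t,\ell)$ suffices (width $mT$). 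Your own remarks already contain the fix — drop the outer $\sigma$ (the margin makes the linear score $\le -1$ off $Z_{t_0}$, which the big constant $M$ then kills) and carry the recentred positive coordinates so the one-sided gate suffices — so the gap is one of bookkeeping rather than of ideas.
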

	
	\begin{proof}
		Let $B>0$ and $q_\ell^{(t)}\in \RR$ such that for all $x\in X$ , $t=1,\ldots, T$ and $\ell=1,\ldots, m$,
		$$|\la x-u^{(t)}, v^{(t,\ell)}\ra|\leq B$$
		and
		\[z^{(t)}=\sum_{\ell=1}^m q_\ell^{(t)}v^{(t,\ell)}+u^{(t)}, \quad t=1,\ldots, T.\]
		Let $x\in X$, say
		\[x=\sum_{\ell=1}^m a^{(t_0)}_\ell v^{(t_0,\ell)}+u^{(t_0)},\]
		for some $t_0=1,\ldots, T$. Then for $t=1,\ldots,T$,
		\[\|x-z^{(t)}\|^2=\|x\|^2+\|z^{(t)}\|^2-2\sum_{\ell=1}^m q^{(t)}_\ell\la x, v^{(t,\ell)}\ra-2\la x, u^{(t)}\ra,\]
		and if $t\neq t_0$,
		\[\|x-z^{(t)}\|^2-\|x-z^{(t_0)}\|^2\geq \mu.\]
		
		Having $y_{t,\ell}:=\la x, v^{(t,\ell)}\ra, y_{t,m+1}:=\la x, u^{(t)}\ra$,  we can construct a fully connected layer
		\begin{align*}
		h_{J+1}(y)_{t,t'}=&\s\left(\sum_{\ell=1}^{m+1} \f{2q^{(t)}_\ell y_{t,\ell}-2q^{(t')}_\ell y_{t',\ell}}{\mu}+\f{\|z^{(t')}\|^2-\|z^{(t)}\|^2
		}\mu\right)\\
		&-\s\left(\sum_{\ell=1}^{m+1} \f{2q^{(t)}_\ell y_{t,\ell}-2q^{(t')}_\ell y_{t',\ell}}{\mu}+\f{\|z^{(t')}\|^2-\|z^{(t)}\|^2
		}\mu-1\right)
		\end{align*}
		where $q^{(t)}_{m+1}=1$. Note that
		\[\sum_{t' = 1}^T h_{J+1}(y)_{t,t'}\left\{
		\begin{array}{ll}
		= T-1, & \hbox{$t=t_0$;} \\
		\leq T-2, & \hbox{$t\neq t_0$.}
		\end{array}
		\right.
		\]
		Then by setting
		\begin{align*}\
		h_{J+2}\left(h^{(J+1)}, h^{(J)}\right)_\ell=&\sum_{t=1}^{T} \s\left(y_{t,\ell}-B\left(T-1-\sum_{t'=1 }^Th_{J+1}(y)_{t,t'}\right)-\la u^{(t)}, v^{(t,\ell)}\ra\right)\\
		=&\la x-u^{(t_0)}, v^{(t_0,\ell)}\ra
		\end{align*}
		It is obvious that $h
		_{J+2}$ can be realized by a fully connection network with two hidden layers and width $2 T^2 $, $mT$. Following the proof of \thmref{1D-approx}, it is easy to obtain the result.
		%
		%
		%
		%
		%
		%
	\end{proof}

%
%

We are able to extend approximation results to data that are actually from a lower dimensional manifold utilizing a nearly metric-preserved linear transformation.

\begin{lem}\label{Manifold-2}[Theorem $3.1$ of \cite{Baraniuk}].
	Let $\mathcal{M}$ be a compact $d_{\mathcal{M}}$-dimensional Riemannian submanifold of $\mathbb{R}^{d}$ having condition number $1 / \tau$, volume $V$, and geodesic covering regularity $\mathcal{R}$. Fix $\delta \in(0,1)$ and $\gamma \in(0,1)$. Let $A=\sqrt{\frac{d}{\tilde m}} \Phi$, where $\Phi \in \mathbb{R}^{\tilde m \times d}$ is a random orthoprojector with
	$$
	\tilde m=\mathcal{O}\left(\frac{d_{\mathcal{M}} \ln \left(d V \mathcal{R}\tau^{-1} \delta^{-1}\right) \ln (1 / \gamma)}{\delta^{2}}\right) .
	$$
	If $\tilde m \leq d$, then with probability at least $1-\gamma$, the following statement holds: For every $x_{1}, x_{2} \in \mathcal{M}$
	$$
	(1-\delta)\left|x_{1}-x_{2}\right| \leq\left|A x_{1}-A x_{2}\right| \leq(1+\delta)\left|x_{1}-x_{2}\right| .
	$$
\end{lem}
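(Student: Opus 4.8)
The plan is to reduce the uniform near-isometry over the manifold to a single-vector concentration inequality and then to pass from finitely many vectors to the full continuum of chords by a covering argument that exploits the bounded curvature of $\M$. Concretely, I would first record the pointwise Johnson--Lindenstrauss estimate for the scaled random orthoprojector $A=\sqrt{d/\tilde m}\,\Phi$: for any fixed $u\in\RR^d$ one has $\mathbb{E}\,|Au|^2=|u|^2$, and by concentration of measure on the Stiefel manifold (equivalently, the subgaussian behaviour of a random projection) there is a universal $c>0$ with
$$
\mathbb{P}\big(\,\big||Au|-|u|\big|>\delta|u|\,\big)\le 2e^{-c\tilde m\delta^2}.
$$
Every statement about a distance $|x_1-x_2|$ becomes a statement about the difference vector $u=x_1-x_2$, so the whole lemma asserts that this single event holds simultaneously for an uncountable family of chords, which is exactly what a net-plus-extension argument is designed to upgrade.

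Second, I would discretize. Using compactness, the finite volume $V$, and the geodesic covering regularity $\mathcal R$, I would build an $\epsilon$-net $Q=\{q_i\}\subset\M$ whose cardinality is bounded by a factor of the form $(C/\epsilon)^{d_{\M}}$ times a geometric constant depending on $V,\mathcal R,\tau$; at each net point I would additionally lay down an $\eta$-net of unit vectors in the $d_{\M}$-dimensional tangent space $T_{q_i}\M$, of cardinality $(C'/\eta)^{d_{\M}}$. I then apply the pointwise bound to every net chord $q_i-q_j$ and to every sampled tangent direction and take a union bound. Since the total number of vectors is exponential in $d_{\M}$ with base governed by $V,\mathcal R,\tau,\epsilon,\eta$, taking logarithms and demanding failure probability at most $\gamma$ yields a requirement on $\tilde m$ of exactly the stated order $d_{\M}\ln(dV\mathcal R\tau^{-1}\delta^{-1})\ln(1/\gamma)/\delta^2$, and fixes the event of probability at least $1-\gamma$ on which $A$ nearly preserves all net chords and all sampled tangents.

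Third --- and this is the step I expect to be the main obstacle --- I would extend the near-isometry from the net to every pair $x_1,x_2\in\M$ on that same good event. The difficulty is that an arbitrary chord is neither a net chord nor a sampled tangent vector, so one must interpolate while keeping both the discretization error and the curvature error below order $\delta$. For two points that are close, I would write $x_1-x_2$ as a tangent vector at a nearby net point plus a second-order curvature correction bounded by $|x_1-x_2|^2/\tau$ through the reach $1/\tau$; preservation of the nearby sampled tangent direction then transfers to the chord. For two points that are far apart, I would approximate $x_1,x_2$ by the nearest net points, invoke preservation of the net chord, and control the substitution error through the net scale $\epsilon$ and the Lipschitz behaviour of $A$. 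The argument closes only if $\epsilon$ and $\eta$ are small relative to $\tau$ and $\delta$ (so local flatness holds at the working scale) yet not so small as to inflate the net cardinality beyond the logarithmic budget above; balancing the three scales $\epsilon,\eta,\delta$ against the reach $\tau$ is the genuinely delicate part.

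Finally, I would assemble the two regimes into a single two-sided bound $(1-\delta)|x_1-x_2|\le|Ax_1-Ax_2|\le(1+\delta)|x_1-x_2|$ valid uniformly over $\M$, after absorbing the accumulated constant and discretization losses into a redefinition of $\delta$ (a constant-factor rescaling that leaves the stated order of $\tilde m$ unchanged). Since $A$ is linear, $|Ax_1-Ax_2|=|A(x_1-x_2)|$, so the conclusion is precisely the simultaneous validity of the pointwise event over all chords, which is what the covering-plus-extension argument delivers.
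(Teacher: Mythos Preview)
The paper does not prove this lemma at all: it is quoted verbatim as Theorem~3.1 of Baraniuk--Wakin and used as a black box in the proof of Remark~\ref{Manifold-1}. So there is no ``paper's own proof'' to compare against.

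That said, your sketch is essentially the original Baraniuk--Wakin argument: pointwise Johnson--Lindenstrauss concentration for the scaled orthoprojector, a union bound over an $\epsilon$-net on $\M$ together with nets on tangent spaces (with cardinality controlled by $V$, $\mathcal R$, $\tau$, and $d_{\M}$), and an extension step that splits into a near regime (chord $\approx$ tangent vector, curvature error $O(|x_1-x_2|^2/\tau)$ via the reach) and a far regime (substitute by nearest net points). The scale-balancing you flag as the delicate part is exactly where the work lies in the original. One small point: the $\ln(1/\gamma)$ factor in the bound on $\tilde m$ comes out multiplicatively only if you argue via a slightly sharper tail than the single-vector $2e^{-c\tilde m\delta^2}$ combined with a naive union bound (which would give an additive $\ln(1/\gamma)$); in Baraniuk--Wakin this is handled by how the failure probability is distributed over the net events, so if you carry out the details be mindful of where that factor enters.
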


For functions supported on a lower-dimensional manifold, we could show the following result (proof is postponed to Appendix B).
	\begin{thm}\label{Manifold-1}
		Let $\mathcal{M}$ be a compact $m$-dimensional Riemannian submanifold of $\RR^d$ that satisfies the conditions in \lemref{Manifold-2}. If $F$ is a Lipchitz-1 function on $\mathcal M$, for $n\in \NN$ and $1\leq p<\infty$,  there exists convolutional layers $\{h_{j}\}_{j=1}^J\in \mathcal{H}(d,\tilde m, J)$ with $J= [\log_2 d]$, $\tilde m = O(m\ln d)$
and a fully connected network $\Psi: \RR^{\tilde m}\to \RR$ with width $4\tilde m \lceil \f{n}{2} \rceil$ and $\lceil \f{n}{2} \rceil^{\tilde m}$ activated by ReLU such that
		$$\|\Phi-F\|_{L^p(\mathcal M)}\leq C_2 n^{-1},$$
		where $\Phi(x)=\Psi(h_{J}(x))$, $C$ only depends on $\mathcal M$ and the Lipchitz constant of $f$. The total number of parameters is less than $4d + (6\tilde m+1)2^{1-\tilde m}n^{\tilde m}$.
	\end{thm}
	
\section{Discussion and Conclusion}
Many existing studies have primarily concentrated on 1D convolutions with a stride of $1$. Bao et al. \cite{bao2014approximation} concentrates on cyclic convolution networks for compositional structures. In the work of Zhou \cite{Zhou2020_1}, the approximation error of functions from Sobolev spaces using 1D CNNs is characterized by $(1/J)^{\f{1}{2}+ \f{1}{d}}$ where $J$ represents the depth. Subsequently, Zhou \cite{Zhou2020_2} introduced downsampling operators in combination with 1D CNNs, establishing approximation results for Lipschitz-$\alpha$ ridge functions. With downsampling, CNNs are also proved to be able to produce any given fully connected neural network (FNN). Later on, multichannel 1D CNNs are explored in \cite{liu2021besov} to approximate Besov functions over lower dimensional manifolds, while their approach builds upon a relationship between CNNs and FNNs, as developed in \cite{oono2019approximation}. The work \cite{he2022approximation} of multichannel 2D CNNs follows a similar approach with the key observation that larger kernels can be reduced to $3\times 3$ kernels and approximation rates of a subspace of $L_2$ space are established.

Our construction offers a significant advantage: it achieves a linear transformation with a depth of only $O(\ln d)$, a substantial reduction when compared to the $O(d)$ depth required in prior work \cite{oono2019approximation}. Consequently, when applying Theorem 1 and Theorem 2 to existing results for FNNs approximating functions from Sobolev space, H\"{o}lder space, or Lipschitz space, there is a notable improvement in depth.
Moreover, these advantages readily extend to 2D Convolutional Neural Networks (CNNs), as demonstrated in Proposition 3. This adaptability empowers us to construct structures (as outlined in Theorem 7, Theorem 8, and Theorem 9) that closely resemble those used in practical applications. However, those stride $1$ convolutions are difficult to be extended from 1D to 2D inputs. The discussion of our 2D CNNs can be directly extended to $d $ dimensional inputs.
The last advantage is that our results require the condition ``stride = kernel size''. With this property, the dimension of outputs of hidden layers is continuously reducing, which is more suitable for applications since usually the memory is limited and downsampling is utilized to design networks frequently.

The above advantages bring benefits to approximating functions. For instance, when inputs are intrinsically defined on lower-dimensional spaces, CNNs can effectively serve as dimension-reduction operators (since CNNs are good at extracting textures of images), while FNNs can take on roles as predictors or classifiers. Theorem 7 yields approximation results with a fixed depth $J = O(\ln d)$ and then the approximation error being solely tied to the FNN component. Additionally, it reveals that if the input dimensionality is essentially $m$ rather than $d$, the total number of parameters of convolutional layers will not exceed $O(md)$. This feature facilitates scenarios where data from different datasets, albeit possessing similar lower-dimensional structures or serving different tasks (e.g., classification or super-resolution), can retain the CNN as a shared dimension-reduction operator, enabling transfer learning.
The implications of Theorem 8 are particularly relevant to classification problems, as it signifies the capability of neural networks to assign labels to inputs originating from diverse affine spaces while only utilizing the most representative members ($v^{t,\ell}$). Notably, in Theorem 9, for Lipschitz-1 functions with a tolerance of $\varepsilon$, the total parameter count remains bounded by $\varepsilon^{-m\ln d}$, a substantial reduction compared to the lower bound of $\varepsilon^{-d/r}$ for Sobolev functions with regularity $r$.

In conclusion, we established approximation analysis for deep convolutional layers. It explicitly reveals the role of multi-channels in extracting features of signals and reduce dimension of data with low-dimensional models.
%
%
%
%
%
%
%
One the other hand, it also verifies the practical observation that convolutions increase the receptive field of a feature from the $\ell$th layer to the $(\ell+1)$th layer. In other words, each feature in the next layer captures a larger spatial
region in the input layer. For example, when using a $2\times 2$ filter convolution successively
in three layers, the activations in the first, second, and third hidden layers capture pixel
regions of size $2\times 2$, $4\times 4$ and $8\times 8$, respectively, in the original input image.
This is a natural consequence of the fact that features in later layers capture complex characteristics of the image over larger spatial regions, and then combine the simpler features in earlier layers. With slightly modification, the same analysis can work as well for signals with more than one channels and convolutions in higher dimensions, like 3D convolution. One can just convert the problem to the 1D convolution case through a proper ordering operator for tensor 3 matrices.

\section{Acknowledgement}
The authors are supported partially by InnoHK initiative, the Government of the HKSAR, and Laboratory for AI-Powered Financial Technologies,
the
Research Grants Council of Hong Kong [Projects No. C1013-21GF, No. 1308020, No.11315522 and  No. 11308121], the Germany/Hong
Kong Joint Research Scheme [Project No. G-CityU101/20].

\section{Appendix A}

\begin{proof}[Proof of  \thmref{1D-extractation}]
	We give a constructive proof to verify the existence. Since one can always use large bias $B^{(j)}_{\ell}$ to push the convolution part in each convolutional layer above zero, without loss of generality we assume that at each layer convolution parts are positive and thus ReLU activation is omitted throughout the proof.
	
	Let $r_{k,j}\in \RR^{2^k}$ and $n_k\in\NN$ such that
	$$\S\left(\bigcup_{\ell=1}^m \P_{2^k}( v^{(\ell)})\right)=\{r_{k,j}, j=1,\ldots, n_k\}, $$
	for all $k=1,\ldots, J$. For convenience, we view $r_{k,j}$ as row vectors without the transpose. Note that $\P_{2^J}(v)=\{v\}$ for any $v\in \RR^{2^J}$ and $\S\left(\bigcup_{\ell=1}^m \P_{2^J}(v^{(\ell)})\right)=\{v^{(\ell)}\}$. Particularly, let $r_{J,\ell}=v^{(\ell)}$ for $\ell=1,\ldots, m$.

	Now fixing any $k=1,\ldots, J-1$ and $j=1,\ldots, n_{k+1}$,
	let $u^{(t)}_{k+1,j}\in \RR^{2^{k}}$, $t=1,2$ such that
	$$(u^{(1)}_{k+1,j},u^{(2)}_{k+1,j})=  r_{k+1,j}\in \RR^{2^{k+1}}.$$
	Then since $r_{k+1,j}\in \Span\{\bigcup_{\ell=1}^m\P_{2^{k+1}}(v^{(\ell)})\}$, there are $c_n \in \RR$ and $b_n=(b^{(1)}_n, b^{(2)}_n)\in \bigcup_{\ell=1}^m\P_{2^{k+1}}(v^{(\ell)})$ such that
	\[r_{k+1,j} =\sum_{n} c_n (b^{(1)}_n, b^{(2)}_n), \quad \text{i.e.} \ u^{(t)}_{k+1,j}=\sum_{n} c_n  b^{(t)}_n,\quad t=1,2, \]
	which implies that
	$u^{(t)}_{k+1,j}\in \Span\{\bigcup_{\ell=1}^m\P_{2^{k}}(v^{(\ell)})\}$ and furthermore, $u^{(t)}_{k+1,j}=\sum_{i=1}^{n_k} w^{(k+1,t)}_{i,j} r_{k,i}$ for $t=1,2$ and some $w^{(k+1,t)}_{i,j}\in \RR$.
	On the other hand,  it yields that
	\begin{align*}
	r_{k+1,j}=&(u^{(1)}_{k+1,j}, 0_{2^k})+(0_{2^k},u^{(2)}_{k+1,j})\\
	=&\sum_{i=1}^{n_k}(w^{(k+1,1)}_{i,j}, 0)\cdot \left(
	\begin{array}{cc}
	r_{k,i} & 0_{2^k} \\
	0_{2^k} &0_{2^k}\\
	\end{array}
	\right)+\sum_{i=1}^{n_k}(0,w^{(k+1,2)}_{i,j})\cdot \left(
	\begin{array}{cc}
	0_{2^k} & 0_{2^k} \\
	0_{2^k} &r_{k,i}\\
	\end{array}
	\right)\\
	=& \sum_{i=1}^{n_k}(w^{(k+1,1)}_{i,j}, w^{(k+1,2)}_{i,j})\cdot \left(
	\begin{array}{cc}
	r_{k,i} & 0_{2^k} \\
	0_{2^k} & r_{k,i}\\
	\end{array}
	\right),
	\end{align*}
where $0_{2^k}$ is the zero row vector of length $2^k$.
	Now taking diagonal operator for both sides,
	\begin{align}\label{keyeqn2}
	&\Diag_{2^{J-k-1}}(r_{k+1,j})\notag\\
	=&\sum_{i=1}^{n_k} \Diag_{2^{J-k-1}}\left(w_{i,j}^{(k+1,1)}, w_{i,j}^{(k+1,2)}\right) \Diag_{2^{J-k-1}}
	\left(
	\begin{array}{cc}
	r_{k,i} & 0_{2^k} \\
	0_{2^k} & r_{k,i} \\
	\end{array}
	\right)\notag\\
	=&\sum_{i=1}^{n_k} \Diag_{2^{J-k-1}}\left(w_{i,j}^{(k+1,1)}, w_{i,j}^{(k+1,2)}\right) \Diag_{2^{J-k}}(r_{k,i}).
	\end{align}
	Set filters of size two to be  
	\[
	W_{i,j}^{(k)}=\left\{
	\begin{array}{ll}
	r_{1,j}, & \hbox{$k=1,i=1, j=1,\ldots, n_1$ ;} \\
	(w^{(k,1)}_{i,j}, w^{(k,2)}_{i,j}), & \hbox{$k=2,\ldots, J$, $i=1,\ldots, n_{k-1}$ and $j=1,\ldots, n_k$,}
	\end{array}
	\right.
	\]
and define $h^{(k)}: \RR^{2^J}\to \RR^{2^{J-k}\times n_{k}}$ by
$$h_k(X)_{j}=\sum_{i=1}^{n_{k-1}}W^{(k)}_{i,j} *_{2} h_{k-1}(X)_i ,\quad j=1,\dots,n_k.$$
	Realizing that for any $x\in \RR^{2k}$, $\Diag_k(a,b)x=(a,b)*_2 x$, then \eqref{keyeqn2} iteratively yields that
	$\la \bar v^{(\ell)}, x\ra=h_J(x)_\ell$, where $h_J$ is as defined by Definition 1 with kernels $\{W_{i,j}^{(k)}\}$ and stride 2.

\end{proof}

\section{Appendix B}

Let $\omega_f$ denotes the modulus of continuity of a continuous function $f$ on $[0, 1]^m$.
To prove Theorem~\ref{Manifold-1}, we need the following lemma.

	\begin{lem}\label{contn}
		Suppose $f$ to be a continuous function on $[0,1]^m$. For any $n\in\NN$ and $\d>0$, there exist $A_\d\subset [0,1]^m$ with $|A_\delta|\leq \d$ and a ReLU network  $\phi$ with two hidden fully connected layers of width $4m \lceil \f{n}{2} \rceil$ and $\lceil \f{n}{2} \rceil^m$ such that
		\[|f(x)-\phi(x)|\leq C\omega_f(n^{-1}), \quad \forall\ x\in [0,1]^d\setminus A_\d.\]
		Additionally, by taking $\d$ sufficiently small, for $1\leq p<\infty$,
		\[\|f-\phi\|_{L_p([0,1]^m)}\leq  C\omega_f(n^{-1}).\]
	\end{lem}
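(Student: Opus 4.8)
The plan is to have the network realize, outside a set of arbitrarily small measure, the piecewise‑constant interpolant of $f$ on a uniform grid of $[0,1]^m$: the first hidden layer manufactures one–dimensional trapezoidal bumps, and the second hidden layer uses a single ``conjunction'' ReLU per grid cell to pick out that cell's (approximate) indicator.

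First I would set $N=\lceil n/2\rceil$, split $[0,1]$ into the $N$ intervals $I_k=[(k-1)/N,k/N]$, and let $Q_{\vec k}=\prod_{i=1}^m I_{k_i}$, $\vec k\in\{1,\ldots,N\}^m$, be the resulting $N^m$ cells with centres $x_{\vec k}$. The target is $g=\sum_{\vec k}f(x_{\vec k})\mathbf 1_{Q_{\vec k}}$, for which $|f-g|\le w_f(\sqrt m/N)\le C\,w_f(n^{-1})$ on $[0,1]^m$ by subadditivity of the modulus of continuity (the constant $C$ depending only on $m$, using $1/N\le 2/n$). Next, fix a ramp width $\eta\in(0,1/(2N))$ and, for each coordinate $i$ and index $k$, let $\psi_{i,k}:[0,1]\to[0,1]$ be the trapezoid that ramps linearly from $0$ to $1$ on $[(k-1)/N,(k-1)/N+\eta]$, equals $1$ on the plateau $[(k-1)/N+\eta,k/N-\eta]$, and ramps back down on $[k/N-\eta,k/N]$. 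Each $\psi_{i,k}$ is a fixed linear combination of at most $4$ ReLU units of the form $\sigma(\pm x_i/\eta-c)$, so all $mN$ bumps together occupy a first hidden layer of width at most $4mN=4m\lceil n/2\rceil$. Since, in a fixed coordinate, the supports $I_k$ of these bumps overlap only at grid points, for every $x$ and every $i$ at most one $\psi_{i,k}(x_i)$ is positive.

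For the second layer I would set, for each cell, $\chi_{\vec k}(x)=\sigma\big(\sum_{i=1}^m\psi_{i,k_i}(x_i)-(m-1)\big)$; this is an affine combination of first–layer outputs followed by $\sigma$, and there are $N^m=\lceil n/2\rceil^m$ of them, forming the second hidden layer. Let $A_\delta$ be the set of points at least one of whose coordinates fails to lie in a plateau. Off $A_\delta$ each coordinate $x_i$ lies in the plateau of a unique $I_{k_i}$, so $\psi_{i,k_i}(x_i)=1$ and hence $\chi_{\vec k}(x)=\sigma(1)=1$ for the cell $Q_{\vec k}\ni x$, while for every other cell some coordinate's bump vanishes and $\chi_{\vec k'}(x)=0$. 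Consequently the output $\phi(x):=\sum_{\vec k}f(x_{\vec k})\chi_{\vec k}(x)$ equals $f(x_{\vec k})$ there, and $|f(x)-\phi(x)|\le w_f(\sqrt m/N)\le C\,w_f(n^{-1})$ for all $x\in[0,1]^m\setminus A_\delta$.

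It remains to control $A_\delta$ and pass to $L^p$. Each ramp slab $\{x:x_i\in[(k-1)/N,(k-1)/N+\eta]\cup[k/N-\eta,k/N]\}$ has measure $\le 2\eta$, so $|A_\delta|\le 2mN\eta$; choosing $\eta=\delta/(2mN)$ gives $|A_\delta|\le\delta$ (and $\eta<1/(2N)$ once $\delta<m$). Since at most one $\chi_{\vec k}$ is nonzero at any point and $0\le\chi_{\vec k}\le1$, one has $\|\phi\|_\infty\le\|f\|_\infty$, hence $\|f-\phi\|_{L^p([0,1]^m)}^p\le\big(C\,w_f(n^{-1})\big)^p+\big(2\|f\|_\infty\big)^p\delta$; taking $\delta$ small enough (depending on $n,f,p$) makes the second term negligible and yields the stated $L^p$ bound. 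The main obstacle is the second‑layer step: replacing the $m$-fold product of coordinate bumps by a single ReLU forces the bumps to be genuinely $\{0,1\}$-valued away from their ramps and requires the ramp geometry to be arranged so that (i) on the complement of a measure‑$\delta$ set the network is exactly the grid interpolant and (ii) the layer widths land precisely on $4m\lceil n/2\rceil$ and $\lceil n/2\rceil^m$; the only other delicate point, upgrading the honest mesh error $w_f(\sqrt m/N)$ to $w_f(n^{-1})$, is absorbed into the $m$-dependent constant via subadditivity of $w_f$.
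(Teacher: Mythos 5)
Your construction is essentially the same as the paper's: one-dimensional trapezoidal bumps built from four ReLUs per grid value in the first hidden layer, a single ``conjunction'' unit $\sigma(\sum_i\psi_{i,k_i}-(m-1))$ per cell in the second layer, exact agreement with the grid interpolant off the union of ramp slabs, and the mesh error $w_f(\sqrt m/N)$ absorbed into the constant by subadditivity. The argument is correct; indeed you are slightly more careful than the paper in normalizing the exceptional set to measure $\le\delta$ and in spelling out the $L^p$ estimate via $\|\phi\|_\infty\le\|f\|_\infty$.
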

	
	\begin{proof}
		For $n\in \NN$ and $0<\delta < 1$, we define
		\[\psi_\delta(t) := \f n{\delta}\left(\sigma(t + 1/n) - \sigma(t+ 1/n-\delta/n ) -\sigma(t-1/n+ \delta/n) + \sigma(t-1/n)\right).\]
		One can see that
		\begin{align*}
		\psi_\delta(t)=\left\{
		\begin{array}{ll}
		0, & \hbox{if $x\in (-\infty, -1/n]\cup [1/n, \infty)$,} \\
		1, & \hbox{if $t \in [-1/n+\d/n,1/n-\d/n]$,} \\
		\text{linear}, & \hbox{otherwise}
		\end{array}
		\right.
		\end{align*}
		Then for a given $f \in C\left([0,1]^d \right)$, we define the following neural network $\phi$
		\begin{align*}
		\phi( x) = \sum_{ k\in\Lambda} f(  k/n ) \Psi_\d( x;  k),
		\end{align*}
		where $\Psi_\d( x;  k)=\sigma\left[\sum_{i=1}^{d} \psi_\delta({ e}_i\cdot  x-k_i/n )-d+1\right]$, $\Lambda=\{1,3,\ldots, 2[n/2]-1\}^d$ and $[m]$ is the smallest integer  $\geq m$.
		For $ k\in \Lambda$, let $B_{ k,\d}=[k_1-1/n+\d/n,k_1 + 1/n-\d/n]\times \cdots \times[k_d-1/n+\d/n,k_d + 1/n-\d/n]$ be the cube with center $ k/n$. Then for any $ x \in \bigcup_{ k\in \Lambda} B_{ k,\d} $, there is a unique $ k^*\in \Lambda$ such that
		$$| e_i\cdot  x -  k_i^*/n|\leq  (1-\d)/n, \quad \forall \ i=1,\ldots, d.$$ and hence $\Psi_\d( x;  k^*) = 1$ and  $\Psi_\d( x;  k) =0$ for $ k\neq  k^*$. It yields that for any $ x\in \bigcup_{ k\in \Lambda} B_{ k,\d}$,
		\begin{align*}
		&	|f( x) - \phi( x)|\leq \sum_{ k\in \Lambda}\Psi_\d( x;  k)| f( x)-f( k/n)| \\
		\leq& | f( x)-f( k^*/n)|\leq \omega_f(\sqrt{d}/n).
		\end{align*}
		On the other hand, let $A_\d=[0,1]^d \setminus \bigcup_{ k\in \Lambda} B_{ k}$. Then $|A_\d|\leq d \delta $.
		
		Finally, note that $\phi$ can be realized by a ReLU network with two hidden layers of width $4m \lceil \f{n}{2} \rceil$ and $\lceil \f{n}{2} \rceil^m$.
		
		%
		%
	\end{proof}
Now we are in the position to prove Theorem~\ref{Manifold-1}.
	\begin{proof}[Proof of Theorem~\ref{Manifold-1}]
		From the above, taking $\delta=\gamma=1/2$, there exists a matrix $A$ in size $\tilde m \times d$ such that
		$$
		\f 1 2\left|x_{1}-x_{2}\right| \leq\left|A x_{1}-A x_{2}\right| \leq \f 3 2\left|x_{1}-x_{2}\right|,
		$$
		where $\tilde m=C_{\tau, V, R} m\ln d$ for some constant $C_{\tau, V, R}$. For any $\var \in(0,1)$, let $\mathcal M_\var=\bigcup_{x\in \mathcal M} \{z\in \RR^d: \|x-z\|\leq \var\}$. Following the methodology of \cite[Sect 4.2]{Shen}, one can construct a lower dimensional function $f$ such that
		\[|F(x)-f(Ax)|\leq C\var \]
		for all $x\in \mathcal M_\var$ and $|f(y_1)-f(y_2)|\leq C \|y_1-y_2\|$ for all $y_1,y_2\in A\mathcal M_\var$. Let $v^{(\ell)}$ be the $\ell$th collumn of $A$. Then by Theorem~\ref{1D-extractation}, there exists $\{h_j\}_{j=1}^J\in \mathcal H(d,\tilde m, J)$ with $J\leq \log_2 d$ such that $v^{(\ell)}\cdot x=h_J(x)_\ell$ for all $\ell=1,\ldots, \tilde m$. Combining with Lemma~\ref{contn}, we complete the proof.
	\end{proof}

\bibliographystyle{abbrvnat}

\begin{thebibliography}{99}
	\bibitem{DNNbook}
	C.~C. Aggarwal, ``Neural networks and deep learning,''
	Springer, vol.~10, pp. 978--3, 2018.
	
	\bibitem{bao2014approximation}
	C.~Bao, Q.~Li, Z.~Shen, C.~Tai, L.~Wu, and X.~Xiang, ``Approximation analysis
	of convolutional neural networks,'' Work, vol.~65, 2014.
	
	\bibitem{Baraniuk}R. G. Baraniuk and M. B. Wakin, ``Random projections of smooth manifolds,'' Foundations of Computational Mathematics, vol.~9, pp. 51--77, 2009.
	
	
	
	\bibitem{bolcskei2019optimal}
	H.~Bolcskei, P.~Grohs, G.~Kutyniok, and P.~Petersen, ``Optimal approximation
	with sparsely connected deep neural networks,'' SIAM Journal on
		Mathematics of Data Science, vol.~1, no.~1, pp. 8--45, 2019.
	
	\bibitem{FFHZ}
	Z.~Fang, H.~Feng, S.~Huang, and D.-X. Zhou, ``Theory of deep convolutional
	neural networks ii: Spherical analysis,'' Neural Networks, vol. 131,
	pp. 154--162, 2020.
	
	\bibitem{geng2015learning}
	X.~Geng, H.~Zhang, J.~Bian, and T.-S. Chua, ``Learning image and user features
	for recommendation in social networks,'' in Proceedings of the IEEE
		International Conference on Computer Vision, 2015, pp. 4274--4282.
	
	\bibitem{Goodfellow2016} I. Goodfellow, Y. Bengio, and A. Courville, Deep Learning. MIT Press, 2016.
	
	
	
	\bibitem{he2022approximation}
	J.~He, L.~Li, and J.~Xu, ``Approximation properties of deep ReLU CNNs,''
	Research in the Mathematical Sciences, vol.~9, no.~3, p.~38, 2022.
	
	\bibitem{Hinton}G. E. Hinton, S. Osindero, and Y. W. Teh, ``A fast learning algorithm for deep belief
	nets,'' Neural Comput., 18:1527-1554, 2006.
	
	\bibitem{Barron}J. Klusowski and A. Barron, ``Approximation by combinations of ReLU and squared ReLU ridge functions with $\ell_1$ and $\ell_0$ controls,'' IEEE Transactions on Information Theory, 64:7649-7656, 2018.
	
	\bibitem{Krizhevsky2012}A. Krizhevsky, I. Sutskever, and G. Hinton, ``Imagenet classification with deep convolutional neural networks,'' NIPS 2012:2097-1105, 2012.
	
	
	\bibitem{LeCun1998}Y. LeCun, L. Bottou, Y. Bengio, and P. Haffner, ``Gradient-based learning applied to document recognition,'' Proceedings of the IEEE, 86: 2278-2324,1998.
	
	\bibitem{lee2020stock}
	S.~W. Lee and H.~Y. Kim, ``Stock market forecasting with super-high dimensional
	time-series data using ConvLSTM, trend sampling, and specialized data
	augmentation,'' Expert Systems with Applications, vol. 161, p. 113704,
	2020.
	
	\bibitem{Leshno1993}M. Leshno, Y. V. Lin, A. Pinkus, and S. Schocken, ``Multilayer feedforward networks
	with a non-polynomial activation function can approximate any function,'' Neural
	Networks, 6:861-867, 1993.
	
	\bibitem{liu2021besov}
	H.~Liu, M.~Chen, T.~Zhao, and W.~Liao, ``Besov function approximation and
	binary classification on low-dimensional manifolds using convolutional
	residual networks,'' in International Conference on Machine
	Learning.\hskip 1em plus 0.5em minus 0.4em\relax PMLR, 2021, pp. 6770--6780.
	
	\bibitem{ZhouMao} T. Mao, Z. Shi, and D.-X. Zhou, ``Theory of deep convolutional neural networks iii: Approximating radial functions,'' Neural Networks, vol. 144, pp. 778-790, 2021.
	
	\bibitem{ZhouMaoShi}T. Mao, Z. Shi, and D. X. Zhou, ``Approximating functions with multi-features by deep convolutional neural networks,'' Analysis and Applications, vol. 21, pp. 93--125, 2023
	
	\bibitem{ZHOU_Mao}T. Mao and D. X. Zhou, ``Rates of approximation by ReLU shallow
	neural networks,'' Journal of Complexity, vol. 79, pp. 101784, 2023.
	
	\bibitem{Mhaskar1993} H. N. Mhaskar, ``Approximation properties of a multilayered feedforward artificial neural network,'' Adv. Comput. Math., 1:61-80, 1993.
	
	\bibitem{moon2019visualizing}
	K.~R. Moon, D.~van Dijk, Z.~Wang, S.~Gigante, D.~B. Burkhardt, W.~S. Chen,
	K.~Yim, A.~v.~d. Elzen, M.~J. Hirn, R.~R. Coifman \emph{et~al.},
	``Visualizing structure and transitions in high-dimensional biological
	data,'' Nature Biotechnology, vol.~37, no.~12, pp. 1482--1492, 2019.
	
	
	
	\bibitem{oono2019approximation}
	K.~Oono and T.~Suzuki, ``Approximation and non-parametric estimation of
	ResNet-type convolutional neural networks,'' in International
	Conference on Machine Learning.\hskip 1em plus 0.5em minus 0.4em\relax PMLR,
	2019, pp. 4922--4931.
	
	\bibitem{OSB99}
	A.~V. Oppenheim, J.~R. Buck, and R.~W. Schafer, Discrete-time signal
		processing. vol. 2. Upper Saddle
	River, NJ: Prentice Hall, 2001.
	
	\bibitem{PV08}P. Prandoni, and M. Vetterli, Signal processing for communications. EPFL Press, 2008.
	
	\bibitem{roweis2000nonlinear}
	S.~T. Roweis and L.~K. Saul, ``Nonlinear dimensionality reduction by locally
	linear embedding,'' Science, vol. 290, no. 5500, pp. 2323--2326, 2000.
	
	\bibitem{Shen}
	Z.~Shen, H.~Yang, and S.~Zhang, ``Deep network approximation characterized by
	number of neurons,'' arXiv preprint arXiv:1906.05497, 2019.
	
	\bibitem{tenenbaum2000global}
	J.~B. Tenenbaum, V.~d. Silva, and J.~C. Langford, ``A global geometric
	framework for nonlinear dimensionality reduction,'' Science, vol. 290,
	no. 5500, pp. 2319--2323, 2000.
	
	\bibitem{wright2022high}
	J.~Wright and Y.~Ma, High-dimensional data analysis with low-dimensional
		models: Principles, computation, and applications.\hskip 1em plus 0.5em
	minus 0.4em\relax Cambridge University Press, 2022.
	
	\bibitem{Yarosky}
	D.~Yarotsky, ``Error bounds for approximations with deep ReLU networks,''
	Neural Networks, vol.~94, pp. 103--114, 2017.
	
	\bibitem{Zhou2020_1} D. X. Zhou. ``Universality of deep convolutional neural networks,'' Appl. Comput. Harmonic Anal., 48:787-794, 2020.
	
	\bibitem{Zhou2020_2} D. X. Zhou. ``Theory of deep convolutional neural networks: downsampling,'' Neural Networks, 124:319-327, 2020.
	
	
	
	
	
	
	
	
	
	
	
	
	
\end{thebibliography}
\small

\end{document}